\newtheorem{theorem}{Theorem}
\newcommand{\modified}[1]{{#1}}
\newcommand{\noteblue}[1]{#1}
\newcommand{\nb}[1]{#1}
\newcommand{\vpara}[1]{\vspace{0.05in}\noindent\textbf{#1 }}
\newcommand{\vpait}[1]{\vspace{0.05in}\noindent\textit{#1 }}
\newtheorem{lemma}{Lemma}
\newtheorem{definition}{Definition}
\title{DropMessage: Unifying Random Dropping for Graph Neural Networks}
\author{
    Taoran Fang\textsuperscript{\rm 1}, 
    Zhiqing Xiao\textsuperscript{\rm 1},
    Chunping Wang\textsuperscript{\rm 2},
    Jiarong Xu\textsuperscript{\rm 3},
    Xuan Yang\textsuperscript{\rm 1},
    Yang Yang\textsuperscript{\rm 1}\thanks{Corresponding author.} \\
}
\begin{document}

\maketitle

\begin{abstract}
Graph Neural Networks (GNNs) are powerful tools for graph representation learning. 
Despite their rapid development, GNNs also face some challenges, such as \textit{over-fitting}, \textit{over-smoothing}, and \textit{non-robustness}.
Previous works indicate that these problems can be alleviated by \textit{random dropping} methods, which integrate \nb{augmented data} into models by randomly masking parts of the input. 
However, some open problems of random dropping on GNNs remain to be solved. 
First, it is challenging to find a universal method that are suitable for all cases considering the divergence of different datasets and models.
Second, \nb{augmented data} introduced to GNNs causes the incomplete coverage of parameters and unstable training process.
\nb{Third, there is no theoretical analysis on the effectiveness of random dropping methods on GNNs.}
\modified{In this paper, we propose a novel random dropping method called \textit{DropMessage}, which performs dropping operations directly on the propagated messages during the message-passing process.}
\nb{More importantly, we find that DropMessage provides a unified framework for most existing random dropping methods, based on which we give theoretical analysis of their effectiveness.} 
Furthermore, we elaborate the superiority of DropMessage: it stabilizes the training process by reducing sample variance; it keeps information diversity from the perspective of information theory, enabling it become a theoretical upper bound of other methods.
To evaluate our proposed method, we conduct experiments that aims for multiple tasks on five public datasets and two industrial datasets with various backbone models. 
The experimental results show that DropMessage has the advantages of both effectiveness and generalization, \nb{and can significantly alleviate the problems mentioned above}.
Our code is available at: https://github.com/zjunet/DropMessage.
\end{abstract}

\section{Introduction}
Graphs, ubiquitous in the real world, are used to present complex relationships among various objects in numerous domains such as social media (social networks), finance (trading networks), and biology (biological networks). 
As powerful tools for representation learning on graphs, graph neural networks (GNNs) have attracted considerable attention recently~\cite{defferrard2016convolutional, kipf2016semi, velivckovic2017graph, ding2018semi}.  
In particular, GNNs adopt a message-passing schema \cite{gilmer2017neural}, in which each node aggregates information from its neighbors in each convolutional layer, and have been widely applied in various downstream tasks such as node classification \cite{kipf2016semi}, link prediction \cite{kipf2016variational}, vertex clustering \cite{ramaswamy2005distributed}, and recommendation systems \cite{ying2018graph}.

Yet, despite their rapid development, training GNNs on large-scale graphs is facing several challenges such as \textit{over-fitting}, \textit{over-smoothing}, and \textit{non-robustness}. 
Indeed, compared to other data forms, gathering labels for graph data is expensive and inherently biased, which limits the generalization ability of GNNs due to over-fitting. 
Besides, representations of different nodes in a GNN tend to become indistinguishable as a result of aggregating information from neighbors recursively.
This phenomenon of over-smoothing prevents GNNs from effectively modeling the higher-order dependencies from multi-hop neighbors~\cite{Li2018DeeperII, Xu2018RepresentationLO,Chen2020MeasuringAR,Zhao2020PairNormTO,Oono2020GraphNN,Oono2019OnAB}. 
Recursively aggregating schema makes GNNs vulnerable to the quality of input graphs~\cite{zhu2019robust,Zgner2018AdversarialAO}. 
In other words, noisy graphs or adversarial attacks can easily influence a GNN's performance. 

The aforementioned problems can be helped by \textit{random dropping} methods~\cite{Hinton2012ImprovingNN,rong2019dropedge,feng2020graph},
which integrate \nb{augmented data} into models by randomly masking parts of the input. 
These methods \cite{Maaten2013LearningWM, Matsuoka1992NoiseII, Bishop1995TrainingWN, Cohen2019CertifiedAR} focus on randomly dropping or sampling existing information, and can also be considered as a data augmentation technique. 
Benefiting from the advantages of being unbiased, adaptive, and free of parameters, random dropping methods have greatly contributed to improving the performance of most GNNs. 

\begin{figure*}
\centering
\includegraphics[width=0.9\textwidth]{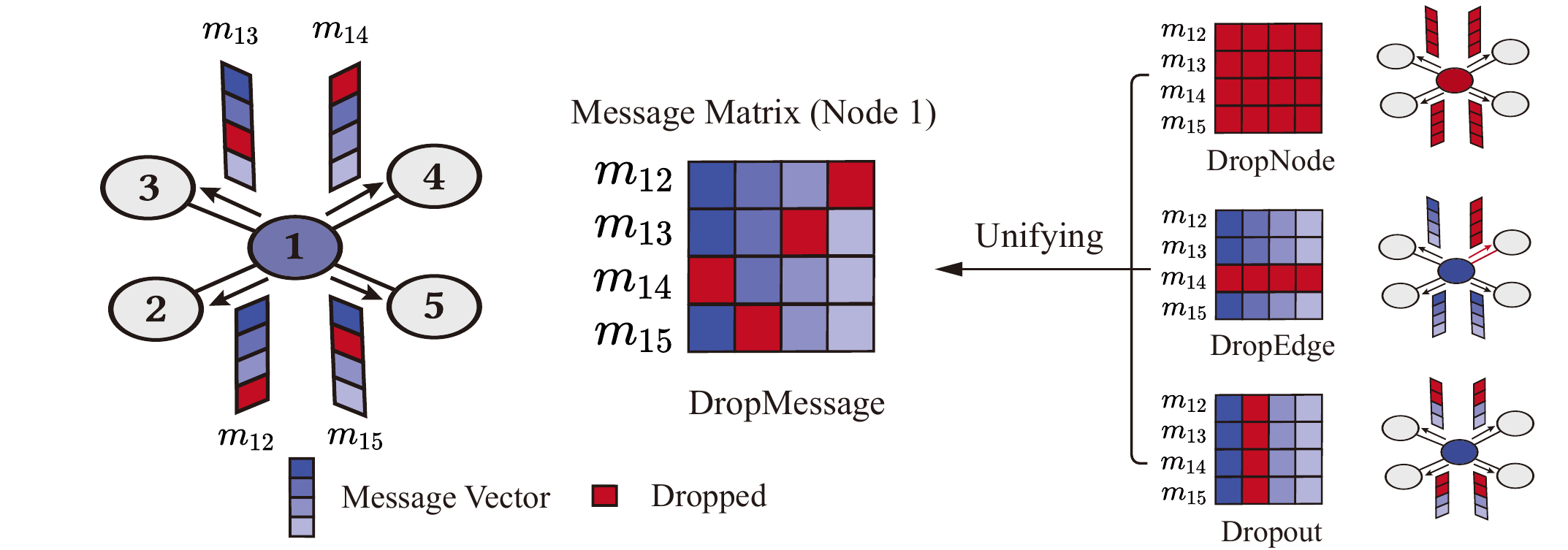}
\caption{
\nb{Illustrations of DropMessage and other existing random dropping methods.
Considering the messages propagated by the center node (\emph{i.e.}, Node 1), DropMessage allows to propagate distinct messages to different neighbor nodes, and its induced message matrix can be arbitrary.
The induced message matrices of other methods obey some explicit constraints and can be regarded as special forms of DropMessage.}
}
\label{fig:model}
\end{figure*}

However, some open questions related to random dropping methods on GNNs still exist.
First, a general and critical issue of existing random dropping methods is that \nb{augmented data} introduced to GNNs make parameters difficult to converge and the training process unstable. 
Moreover, it is challenging to find an optimal dropping method suitable to all graphs and models,
because different graphs and models are equipped with their own properties and the model performance can be influenced greatly by employing various dropping strategies.
Furthermore, the answer to how to choose a proper dropping rate when applying these methods is still unclear, and so far no theoretical guarantee has been provided to explain why random dropping methods can improve the performance of a GNN. 

In this paper, we propose a novel random dropping method called DropMessage, which can be applied to all message-passing GNNs.
As Figure \ref{fig:model} suggests, existing random dropping methods perform dropping on either the node feature matrix \cite{Hinton2012ImprovingNN,feng2020graph} or the adjacency matrix \cite{rong2019dropedge}, while our DropMessage performs dropping operations on the propagated messages, which allows the same node to propagate different messages to its different neighbors.
Besides, we unify existing dropping methods into our framework and demonstrate theoretically that conducting random dropping methods on GNNs is equivalent to introducing additional regularization terms to their loss functions, which makes the models more robust.
Furthermore, we also elaborate the superiority of our DropMessage whose sample variance is much smaller and training process is more stable.
From the perspective of information theory, DropMessage keeps the property of information diversity, and is theoretically regarded as an upper bound of other random dropping methods.
To sum up, the contributions of this paper are as follows:
\begin{itemize}
    \item{
    We propose a novel random dropping method, called DropMessage, for all message-passing GNNs. 
    Existing random dropping methods on GNNs can be unified into our framework via performing masking in accordance with the certain rule on the message matrix.
    \nb{In other words, these methods can be regarded as one special form of DropMessage.}
    }
    \item{
    \nb{We theoretically demonstrate the effectiveness of the random dropping methods, filling the gap in this field.}
    }
    \item{
    We conduct sufficient experiments for different downstream tasks, and the experimental results show that DropMessage can better alleviate over-fitting, over-smoothing, and non-robustness compared to other existing random dropping methods.
    }

\end{itemize}

\section{Related Work}
To sort out the key logic of our work, we first review some related work about random dropping methods with a particular focus on GNNs. 


In general, random dropping can be regarded as a form of feature-noising schema that alleviates over-fitting by artificially corrupting the training data. 
As a representative work, Dropout is first introduced by Hinton et al. \cite{Hinton2012ImprovingNN} and has been proved to be effective in many scenarios~\cite{abu1990learning,Burges1996ImprovingTA,simard1998transformation,Rifai2011TheMT,Maaten2013LearningWM}.
Besides, Bishop \cite{Bishop1995TrainingWN} demonstrates the equivalence of corrupted features and $\textit{L}_{2}$-type regularization. 
Wager et al. \cite{Wager2013DropoutTA} show that the dropout regularizer is first-order equivalent to an $\textit{L}_{2}$ regularizer that being applied after scaling the features by an estimate of the inverse diagonal Fisher information matrix.

With the rapid development of GNNs, random dropping has also been generalized to the graph field, thus leading to three most common methods: \textit{Dropout}~\cite{Hinton2012ImprovingNN}, \textit{DropEdge}~\cite{rong2019dropedge} and \textit{DropNode}~\cite{feng2020graph}. 
Dropout performs random dropping operation on the node feature matrix, while DropEdge and DropNode, as the name implies, respectively act on the adjacency matrix (edges) and nodes. 
These random dropping methods can also be regarded as special forms of data augmentation \cite{Shorten2019ASO, FridAdar2018GANbasedSM, Buslaev2020AlbumentationsFA, Ding2022DataAF, Velickovic2019DeepGI}, with the advantage of not requiring parameter estimation \cite{Papp2021DropGNNRD, Luo2021LearningTD, Chen2018FastGCNFL, Zeng2020GraphSAINTGS} and easy to apply. 
All the methods mentioned above can be used to alleviate \textit{over-fitting} and \textit{over-smoothing} on GNNs. 
However, they can achieve effective performance only on some specific datasets and GNNs. 
The question of how to find an optimal dropping method that suitable for most cases still remains to be explored.
\noteblue{Moreover, there is no theoretical explanation about the effectiveness of random dropping methods on GNNs, which adds some ambiguity to the function of these methods.}


\section{Notations and Preliminaries}

\vpara{Notations.} 
Let $\mathbf{G}=\left(\mathbf{V}, \mathbf{E}\right)$ represent the graph, 
where $\mathbf{V}=\left\{v_{1}, \ldots, v_{n}\right\}$ denotes the set of $n$ nodes, 
and $\mathbf{E} \subseteq \mathbf{V} \times \mathbf{V}$ is the set of edges between nodes. 
The node features can be denoted as a matrix 
$\mathbf{X} = \left\{ x_{1}, \ldots, x_{n} \right\} \in \mathbb{R}^{n \times c}$, 
where $x_{i}$ is the feature vector of the node $v_i$, and $c$ is the dimensionality of node features.
The edges describe the relations between nodes and can be represented as an adjacent matrix $\mathbf{A} = \left\{ a_{1}, \ldots, a_{n} \right\} \in \mathbb{R}^{n \times n}$, 
where $a_{i}$ denotes the $i$-th row of the adjacency matrix, and $\mathbf{A} \left(i, j\right)$ denotes the relation between nodes $v_{i}$ and $v_{j}$.
Also, the node degrees are given by $ \mathbf{d}=\left\{ d_{1}, \ldots, d_{n} \right\}$, where $d_{i}$ computes the sum of edge weights connected to node $v_i$.
Meanwhile, the degree of the whole graph is calculated by $\mathbf{d(G)}=\sum_{i}^{n}d_{i}$.
When we apply message-passing GNNs on $\mathbf{G}$, 
the message matrix can be represented as $\mathbf{M}=\left\{m_{1}, \ldots, m_{k}\right\} \in \mathbb{R}^{k \times c'}$, 
where $m_{i}$ is a message propagated between nodes, $k$ is the total number of messages propagated on the graph, \modified{and $c'$ is the dimension number of the messages}.

\vpara{Message-passing GNNs.} 
Most of the existing GNN models 
adopt the message-passing framework, where each node sends messages to its neighbors and simultaneously receives messages from its neighbors.
In the process of the propagation, node representations are updated based on node feature information and messages from neighbors, which can be formulated as 

\begin{equation}
h^{(l+1)}_{i}=\gamma^{(l)}(h^{(l)}_{i},\mathcal{AGG}_{j\in\mathcal{N}(i)}(\phi^{(l)}(h_{i}^{(l)},h_{j}^{(l)},e_{j,i})))
\end{equation}

\noindent where $h^{(l)}_{i}$ denotes the hidden representation of node $v_i$ in the $l$-th layer, and $\mathcal{N}(i)$ is a set of nodes adjacent to node $v_i$; 
$e_{j,i}$ represents the edge from node $j$ to node $i$; 
$\phi^{(l)}$ and $\gamma^{(l)}$ are differentiable functions;
and $\mathcal{AGG}$ represents the aggregation operation. 
\modified{From the perspective of the message-passing schema, we can gather all the propagated messages into a message matrix $\mathbf{M} \in \mathbb{R}^{k \times c'}$.
Specifically, each row of the message matrix $\mathbf{M}$ corresponds to a message propagated on a directed edge, which can be expressed as below:
\begin{align}
\mathbf{M}^{(l)}_{(i,j)}=\phi^{(l)}(h_{i}^{(l)},h_{j}^{(l)},e_{j,i}^{(l)}) \nonumber
\end{align}
where $\phi$ denotes the mapping that generates the messages, $c'$ is the dimension number of the messages, and the row number $k$ of the message matrix $\mathbf{M}$ is equal to the directed edge number in the graph.}


\section{Our Approach}
In this section, we introduce our proposed \textit{DropMessage}, which can be applied to all message-passing GNNs. 
We first describe the details of our approach, and further prove that the most common existing random dropping methods, i.e., Dropout, DropEdge and DropNode, can be unified into our framework.  
Based on that, we give a theoretical explanation of the effectiveness of these methods. 
After that, we theoretically analyze the superiority of DropMessage in terms of stabilizing the training process and keeping information diversity. 
Finally, we derive a theoretical upper bound to guide the selection of dropping rate $\delta$. 

\subsection{DropMessage}
\label{section41} 
\vpara{Algorithm description.}
Different from existing random dropping methods, DropMessage performs 
directly on the message matrix $\mathbf{M}$ instead of the feature matrix or the adjacency matrix. 
More specifically, DropMessage conducts dropping on the message matrix with the dropping rate $\delta$, 
which means that $\delta|\mathbf{M}|$ elements of the message matrix will be masked in expectation. 
Formally, this operation can be regarded as a sampling process. 
For each element $\mathbf{M}_{i,j}$ in the message matrix, we generate an independent mask $\epsilon_{i,j}$ to determine whether it will be preserved or not, according to a Bernoulli distribution $\epsilon_{i,j}\sim Bernoulli(1-\delta)$. 
Then, we obtain the perturbed message matrix $\mathbf{\widetilde{M}}$ by multiplying each element with its mask. 
Finally, we scale $\mathbf{\widetilde{M}}$ with the factor of $\frac{1}{1-\delta}$ to guarantee that the perturbed message matrix is equal to the original message matrix in expectation. 
Thus, the whole process can be expressed as $\mathbf{\widetilde{M}}_{i,j}=\frac{1}{1-\delta}\epsilon_{i,j}\mathbf{M}_{i,j}$, where $\epsilon_{i,j}\sim Bernoulli(1-\delta)$. 
The applied GNN model then propagates information via the perturbed message matrix  $\mathbf{\widetilde{M}}$ instead of the original message matrix. 
It should be moted that DropMessage only affects on the training process. 

\nb{
\vpara{Practical implementation.}
In practice, we do NOT need to generate the complete message matrix explicitly, because each row in the message matrix represents a distinct directed edge in the graph, and our proposed DropMessage can be applied to every directed edge independently.
This property allows DropMessage to be easily parallelized, \emph{e.g.}, edge-wise, node-wise, or batch-wise, and to be applied to the message-passing backbone model \textit{without increasing time or space complexity}.
}

\vpara{Unifying random dropping methods.} 
As we have mentioned above, DropMessage differs from existing methods by directly performing on messages instead of graphs. 
However, in intuition, the dropping of features, edges, nodes or messages will all eventually act on the message matrix. 
It inspires us to explore the theoretical connection between different dropping methods. 
As a start, we demonstrate that Dropout, DropEdge, DropNode, and DropMessage can all be formulated as Bernoulli sampling processes in Table \ref{tab:methods}. 
More importantly, we find that existing random dropping methods are actually special cases of DropMessage, and thus can be expressed in a uniform framework. 


\begin{table}[t]
    \centering
    \caption{Overview of different random dropping methods in a view of Bernoulli sampling process.}
    \label{tab:methods}
    \begin{tabular}{cc}
    \toprule
    Method & Formula \\
    \midrule
    Dropout & $\mathbf{\widetilde{X}}_{i,j}=\epsilon \mathbf{X}_{i,j}$ \\
    DropEdge & $\mathbf{\widetilde{A}}_{i,j}=\epsilon \mathbf{A}_{i,j}$ \\
    DropNode & $\mathbf{\widetilde{X}}_{i}=\epsilon \mathbf{X}_{i}$ \\
    DropMessage & $\mathbf{\widetilde{M}}_{i,j}=\epsilon \mathbf{M}_{i,j}$ \\
    \bottomrule
    \footnotesize
    $s.t.\ \epsilon\sim Bernoulli(1-\delta)$
    \end{tabular}
\end{table}




\begin{lemma}
\label{Lamma1}
Dropout, DropEdge, DropNode, and DropMessage perform random masking on the message matrices in accordance with certain rules. 
\end{lemma}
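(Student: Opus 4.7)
The plan is to show, for each of the four methods, that the random dropping operation is equivalent to multiplying the message matrix $\mathbf{M}$ entry-wise by a $\{0,1\}$-valued mask whose entries are marginally $\mathrm{Bernoulli}(1-\delta)$, and that what distinguishes the methods is the dependence structure imposed on the mask (i.e., which entries of $\mathbf{M}$ are forced to share the same Bernoulli variable). The lemma then reduces to exhibiting, for each method, the explicit ``certain rule'' governing this coupling.

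I would begin by fixing the canonical message rule $m_{(u,v)} = \phi(h_u, h_v, e_{u,v})$ with the convention that each row of $\mathbf{M}$ corresponds to a directed edge and each column to a feature coordinate. The DropMessage case is immediate from its definition, giving an entry-independent mask. For DropEdge, zeroing an edge $(u,v)$ zeroes the entire corresponding row of $\mathbf{M}$, yielding a row-constant mask (one Bernoulli per edge). For Dropout on the input features, zeroing $\mathbf{X}_{u,j}$ propagates to the $j$-th coordinate of every message originating at $u$, yielding a mask that is constant on the intersection of a column with the block of rows sharing source node $u$. For DropNode, zeroing the feature row $\mathbf{X}_u$ zeroes every message out of $u$ in every coordinate, producing a mask that is constant on an entire row block. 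Assembling these four descriptions into a single statement then proves the lemma.

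The main obstacle I anticipate is that at layer $l \ge 1$ the hidden state $h_u^{(l)}$ already mixes contributions from many earlier messages, so a feature-level mask no longer cleanly corresponds to a column pattern on $\mathbf{M}^{(l)}$. I would sidestep this by phrasing Lemma~\ref{Lamma1} one propagation step at a time, which is sufficient because random dropping is redrawn independently at each layer during training; the per-layer structural descriptions (entry-wise, row-tied, column-within-row-block, row-block-tied) are then preserved across layers. The remainder of the argument is bookkeeping---identifying, for each method, the equivalence classes of entries of $\mathbf{M}$ that share a single Bernoulli variable---which I would relegate to a short case-by-case verification and then summarize in a table parallel to Table~\ref{tab:methods}.
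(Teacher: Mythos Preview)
Your proposal is correct and follows essentially the same approach as the paper: a case-by-case identification, for each of the four methods, of which entries of $\mathbf{M}$ get masked (equivalently, which entries are tied to a common Bernoulli variable), with DropMessage as the finest-grained case. Your framing in terms of the coupling structure of the mask is slightly more explicit than the paper's set-theoretic descriptions, and your remark about handling $l\ge 1$ layer-by-layer is a useful clarification that the paper leaves implicit.
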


We provide the equivalent operation on the message matrix of each method below. 

\vpait{Dropout.} 
Dropping the elements 
$X_{drop}=\{\mathbf{X}_{i,j}|\epsilon_{i,j}=0\}$ 
in the feature matrix $\mathbf{X}$ is equivalent to masking elements 
$M_{drop}=\{\mathbf{M}_{i,j}|source(\mathbf{M}_{i,j})\in X_{drop}\}$ 
in the message matrix $\mathbf{M}$, where $source(\mathbf{M}_{i,j})$ indicates which element in the feature matrix that $\mathbf{M}_{i,j}$ corresponds to.

\vpait{DropEdge.}
Dropping the elements 
$E_{drop}=\{\mathbf{E}_{i,j}|\mathbf{A}_{i,j}=1\; and \; \epsilon_{i,j}=0\}$ 
in the adjacency matrix $\mathbf{A}$ is equivalent to masking elements
$M_{drop}=\{\mathbf{M}_{i}|edge(\mathbf{M}_{i})\in E_{drop}\}$ 
in the message matrix $\mathbf{M}$, where $edge(\mathbf{M}_{i})$ indicates which edge that $\mathbf{M}_{i}$ corresponds to.

\vpait{DropNode.} 
Dropping the elements 
$V_{drop}=\{\mathbf{X}_{i}|\epsilon_{i}=0\}$ 
in the feature matrix $\mathbf{X}$ is equivalent to masking elements 
$M_{drop}=\{\mathbf{M}_{i}|node(\mathbf{M}_{i})\in V_{drop}\}$ 
in the message matrix $\mathbf{M}$, where $node(\mathbf{M}_{i})$ indicates which row in the feature matrix that $\mathbf{M}_{i}$ corresponds to.

\vpait{DropMessage.} 
This method directly performs random masking on the message matrix $\mathbf{M}$. 

According to above descriptions, we find DropMessage conduct finest-grained masking on the message matrix, which makes it the most flexible dropping method, and other methods can be regarded as a special form of DropMessage. 

\vpara{Theoretical explanation of effectiveness.} 
Previous studies have explored and explained why random dropping works in the filed of computer vision~\cite{Wager2013DropoutTA, Wan2013RegularizationON}. 
However, to the best of our knowledge, the effectiveness of random dropping on GNNs has not been studied yet. 
To fill this gap, based on the unified framework of existing methods, 
we next provide a theoretical analysis. 


\begin{theorem}
\label{Regularization}
Unbiased random dropping on GNNs methods introduce an additional regularization term into the objective functions, which makes the models more robust. 
\end{theorem}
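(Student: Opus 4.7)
The plan is to use Lemma \ref{Lamma1} to reduce all four dropping schemes to a common sampling rule on the message matrix, then Taylor-expand the loss around the unperturbed messages and take expectation; the surviving second-order term will be the promised regularizer. Concretely, every method can be written as $\widetilde{\mathbf{M}}_{i,j}=\frac{1}{1-\delta}\epsilon_{i,j}\mathbf{M}_{i,j}$ with $\epsilon_{i,j}\sim\mathrm{Bernoulli}(1-\delta)$, where the masks $\epsilon_{i,j}$ are independent for DropMessage and share structure across rows (DropNode, DropEdge) or columns (Dropout) in the other three cases.

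The first step is to use the ``unbiased'' hypothesis to record $\mathbb{E}[\widetilde{\mathbf{M}}_{i,j}]=\mathbf{M}_{i,j}$, and to compute the covariance matrix $\Sigma$ of the vectorised perturbation $\Delta:=\widetilde{\mathbf{M}}-\mathbf{M}$. For DropMessage, $\Sigma$ is diagonal with $\Sigma_{(i,j),(i,j)}=\frac{\delta}{1-\delta}\mathbf{M}_{i,j}^{2}$; for the other three methods it is block-structured according to the shared-mask pattern described in Lemma \ref{Lamma1}, but still positive semi-definite and proportional to $\delta/(1-\delta)$.

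I would then Taylor-expand the end-to-end training loss, viewed as a function $L$ of the message matrix, to second order around $\mathbf{M}$:
\begin{equation}
L(\widetilde{\mathbf{M}})=L(\mathbf{M})+\nabla L(\mathbf{M})^{\top}\Delta+\tfrac{1}{2}\Delta^{\top}\nabla^{2}L(\mathbf{M})\,\Delta+o(\|\Delta\|^{2}).
\end{equation}
Taking expectation over the dropping randomness, the linear term vanishes because $\mathbb{E}[\Delta]=0$, leaving
\begin{equation}
\mathbb{E}[L(\widetilde{\mathbf{M}})]=L(\mathbf{M})+\tfrac{1}{2}\mathrm{tr}\!\bigl(\nabla^{2}L(\mathbf{M})\,\Sigma\bigr)+o(\delta).
\end{equation}
The bracketed term is an explicit, data-dependent regularizer $\mathcal{R}(\mathbf{M})$ proportional to $\delta/(1-\delta)$: it penalises directions in which the network is highly curved and the messages are large, which is exactly the robustness statement of the theorem.

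The main obstacle is making the Taylor remainder rigorous when $L$ is a deep, non-convex function of $\mathbf{M}$. I would handle this as in the classical dropout-as-regularization analyses of Bishop and Wager et al., by assuming $L$ is $C^{2}$ in $\mathbf{M}$ (satisfied by standard GNN readouts with smooth activations) and absorbing the higher-order terms into the $o(\delta)$ remainder. A secondary subtlety is the correlation structure of the masks in Dropout, DropEdge and DropNode; this is dispatched by computing $\Sigma$ separately for each method using Lemma \ref{Lamma1} and observing that in every case the resulting regularizer has the same curvature-weighted quadratic form in $\mathbf{M}$, so a single uniform argument covers all four schemes at once.
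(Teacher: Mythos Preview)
Your proposal is correct and follows essentially the same route as the paper: a second-order Taylor expansion of the loss around the clean forward pass, with the unbiasedness of the dropping killing the first-order term and leaving a variance/curvature second-order regularizer. The paper merely instantiates this argument in a concrete setting---single-layer GCN, sigmoid output, binary cross-entropy---so that your abstract $\tfrac{1}{2}\,\mathrm{tr}\bigl(\nabla^{2}L(\mathbf{M})\,\Sigma\bigr)$ collapses to the explicit form $\sum_i \tfrac{1}{2}z_i(1-z_i)\,\mathrm{Var}(\tilde h_i)$.
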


\begin{proof}
For analytical simplicity, we assume that the downstream task is a binary classification and we apply 
a single layer GCN~\cite{kipf2016semi} as the backbone model, which can be formulated as $\mathbf{H}=\mathbf{\overline{B}}\mathbf{M}\mathbf{W}$, 
where $\mathbf{M}$ denotes the message matrix, $\mathbf{W}$ denotes the transformation matrix, 
$\mathbf{B} \in \mathbb{R}^{n \times k}$ indicates which messages should be aggregated by each node and $\mathbf{\overline{B}}$ is its normalized form. 
Also, we adopt sigmoid as non-linear function and present the result as $\mathbf{Z}=sigmoid(\mathbf{H})$. 
When we use cross-entropy as loss function, the objective function can be expressed as follows:
\begin{align}
L_{CE}&=\sum_{j,y_{j}=1}log(1+e^{-h_{j}})+\sum_{k,y_{k}=0}log(1+e^{h_{k}})
\end{align}

When performing random dropping on graphs, we use the perturbed message matrix $\mathbf{\widetilde{M}}$ instead of the original message matrix $\mathbf{M}$. 
Thus, the objective function in expectation can be expressed as follows: 
\begin{align}
\label{eqLce}
E(\widetilde{L}_{CE})&=L_{CE}
+\sum_{i}\frac{1}{2}z_{i}(1-z_{i})Var(\tilde{h}_{i})
\end{align}

More details of the derivation can be found in Appendix.
As shown in Equation \ref{eqLce}, random dropping methods on graphs introduce an extra regularization to the objective function. 
For binary classification tasks, this regularization enforces the classification probability approach to 0 or 1, \noteblue{thus a clearer judgment can be obtained}.
\noteblue{By reducing the variance of $\tilde{h}_{i}$, random dropping methods motivate the model to extract more essential high-level representations.}
Therefore, the robustness of the models is enhanced.
\nb{It is noted that Equation \ref{eqLce} can be well generalized to multi-classification tasks by extending dimension of the model output.
Formally, when dealing with the multi-classification task, the final objective function can be expressed as $E(\widetilde{L}_{CE})=L_{CE}+\sum_{i}\frac{1}{2}z_{i}^{c_i}(1-z_{i}^{c_i})Var(\tilde{h}_{i}^{c_i})$, where $c_i$ is the label of node $v_i$, and the superscript indicates which dimension of the vector is selected.
}
\end{proof}

\subsection{Advantages of DropMessage}
\label{section42}
We give two additional analysis to demonstrate the advantages of DropMessage on two aspects: stabilizing the training process and keeping diverse information.  

\vpara{Reducing sample variance.}
All random dropping methods are challenged by the problem of unstable training process. 
As existing works suggest, it is caused by the random noises introduced into each training epoch. These noises then add the difficulty of parameter coverage and the unstability of training process.
\noteblue{Generally, \textit{sample variance} can be used to measure the degree of stability.
According to Table \ref{tab:methods}, the input of each training epoch can be regarded as a random sample of the whole graph, and the sample variance is calculated by the average difference of every two independent samples.
}
Compared with other random dropping methods, DropMessage effectively alleviates the aforementioned problem by reducing the sample variance.

\begin{theorem}
\label{t2}
DropMessage presents the smallest sample variance among existing random dropping methods on message-passing GNNs with the same dropping rate $\delta$.
\end{theorem}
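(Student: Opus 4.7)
The plan is to exploit the unified message-matrix view established in Lemma~\ref{Lamma1}: every scheme can be written as $\mathbf{\widetilde{M}}_{i,j} = \eta_{i,j}\mathbf{M}_{i,j}/(1-\delta)$ with $\eta_{i,j}\sim\mathrm{Bernoulli}(1-\delta)$, and the four methods differ only in the pattern with which the masks $\eta_{i,j}$ are \emph{shared} across entries. I would first make this sharing pattern explicit as a partition $\{G_\alpha\}$ of the entries of $\mathbf{M}$ with one common mask per block: DropMessage uses singletons; DropEdge groups all entries of a common row; DropNode groups all entries of all rows emanating from a common source node; Dropout groups all rows with a common source together within each column. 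In all four cases the Bernoulli parameter per block is $1-\delta$, so $E[\mathbf{\widetilde{M}}]=\mathbf{M}$ and the expected mass of zeroed entries matches across methods. Crucially, DropMessage produces the finest partition of the four.

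Next I would operationalize the informal description of ``sample variance'' (``the average difference of every two independent samples'') as a quadratic functional of two iid copies $\mathbf{\widetilde{M}}^{(1)},\mathbf{\widetilde{M}}^{(2)}$. The naive choice $E\bigl[\|\mathbf{\widetilde{M}}^{(1)}-\mathbf{\widetilde{M}}^{(2)}\|_F^2\bigr]$ collapses to $\tfrac{2\delta}{1-\delta}\|\mathbf{M}\|_F^2$ for all four schemes and thus cannot separate them; the distinguishing information lives in joint second moments. I would therefore compare $\mathrm{Var}\bigl(\|\mathbf{\widetilde{M}}\|_F^2\bigr)$, or the variance of the node-level aggregate $\overline{\mathbf{B}}\mathbf{\widetilde{M}}$ that the downstream GNN actually consumes (the same object driving Theorem~\ref{Regularization}). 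A direct computation using $\eta^2=\eta$ for Bernoulli $\eta$ yields
\begin{equation*}
\mathrm{Var}\bigl(\|\mathbf{\widetilde{M}}\|_F^2\bigr) \;=\; \frac{\delta}{(1-\delta)^3}\sum_\alpha \Big(\sum_{(i,j)\in G_\alpha}\mathbf{M}_{i,j}^2\Big)^{\!2}.
\end{equation*}
Because $(\sum_a x_a)^2\geq\sum_a x_a^2$ for nonnegative $x_a$, the right-hand side is minimized precisely when every block is a singleton, which is DropMessage. The parallel calculation for a node-level aggregate reduces to the same principle: all off-diagonal cross-covariances $\mathrm{Cov}(\mathbf{\widetilde{M}}_{i,j},\mathbf{\widetilde{M}}_{i',j'})$ vanish under DropMessage and are nonnegative under each coarser scheme, so any weighted sum of such cross-covariances is non-positive relative to DropMessage.

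The main obstacle is not the algebra but committing to the scalar functional that deserves the name ``sample variance,'' because the most literal sum-of-per-entry variances is partition-insensitive. I would anchor the choice to an aggregate that the downstream model actually processes (either $\|\mathbf{\widetilde{M}}\|_F^2$ or the per-node aggregation above), which both ties the statement to training stability and makes the partition ordering immediate. A final sanity check I would perform is that the comparison is fair even though DropEdge, DropNode and Dropout induce three different, non-nested partitions: since each of them is a genuine coarsening of the DropMessage singleton partition, the convexity-style inequality applies against each independently, so DropMessage emerges as the minimum regardless of the specific graph structure or degree distribution.
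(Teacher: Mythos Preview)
Your argument is correct and in fact cleaner and more general than the paper's. The paper proceeds by brute-force computation under two simplifying assumptions: the message matrix is taken to be all ones, and the graph is assumed $d$-regular. Under those assumptions it measures ``sample variance'' as $\mathrm{Var}(|\mathbf{\widetilde{M}}|_1)$, i.e.\ the variance of the total sum of entries (without the $1/(1-\delta)$ rescaling), and then explicitly tabulates $(1-\delta)\delta\cdot ncd$, $(1-\delta)\delta\cdot ncd^2$, $2(1-\delta)\delta\cdot nc^2d$, $(1-\delta)\delta\cdot nc^2d^2$ for DropMessage, Dropout, DropEdge, DropNode respectively, and compares them by inspection. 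Your partition-refinement route captures exactly the mechanism behind those numbers---$\mathrm{Var}(|\mathbf{\widetilde{M}}|_1)=\tfrac{\delta}{1-\delta}\sum_\alpha\bigl(\sum_{(i,j)\in G_\alpha}\mathbf{M}_{i,j}\bigr)^2$ under the paper's functional, or your $\tfrac{\delta}{(1-\delta)^3}\sum_\alpha\bigl(\sum_{(i,j)\in G_\alpha}\mathbf{M}_{i,j}^2\bigr)^2$---and the super-additivity $(\sum x_a)^2\geq\sum x_a^2$ gives the ordering for any $\mathbf{M}$ and any graph, not just the all-ones/$d$-regular case. The paper's choice of functional is simpler (linear aggregate rather than your squared Frobenius norm), but yours has the advantage of working for arbitrary-signed $\mathbf{M}$ since the block summands $\mathbf{M}_{i,j}^2$ are automatically nonnegative. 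Your explicit discussion of why the per-entry variance sum $E\|\mathbf{\widetilde{M}}^{(1)}-\mathbf{\widetilde{M}}^{(2)}\|_F^2$ fails to distinguish the methods is also a point the paper glosses over.
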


We leave the proof 
in Appendix. 
Intuitively, DropMessage independently determines whether an element in the message matrix is masked or not, which is exactly the smallest Bernoulli trail for random dropping on the message matrix. 
By reducing the sample variance, DropMessage 
diminishes the difference of message matrices among distinct training epochs, which stabilizes the training process and expedites the convergence. 
\noteblue{
The reason why DropMessage has the minimum sample variance is that it is the finest-grained random dropping method for GNN models.
When applying DropMessage, each element $\mathbf{M}_{i,j}$ will be independently judged that whether it should be masked.}

\vpara{Keeping diverse information.} 
\label{section44}
In the following, we compare different random dropping methods with their degree of losing \textit{information diversity}, from the perspective of information theory. 

\begin{definition}
\label{def44}
\nb{
The information diversity consists of feature diversity and topology diversity. 
We define feature diversity as $\textit{FD}_{G}=card(\{\Vert M_{SN(v_i),l}\Vert_0\geq 1\})$, where $v_i\in \mathbf{V}$, $l\in[0,c)$, $SN(v_i)$ indicates the slice of the row numbers corresponding to the edges sourced from $v_i$; 
topology diversity is defined as $\textit{TD}_{G}=card(\{\Vert M_j\Vert_0\geq 1\})$, where $j\in [0,k)$. 
$M\in \mathbb{R}^{k \times c}$ represents the message matrix, $\Vert\cdot\Vert_0$ calculates the zero norm of the input vector, and $card(\cdot)$ counts the number of elements in the set.}
\end{definition}


\nb{In other words, feature diversity is defined as the total number of preserved feature dimensions from distinct source nodes; 
topology diversity is defined as the total number of directed edges propagating at least one dimension message.}  
With the above definition, we claim that a method possesses the ability of \textit{keeping information diversity} only under the condition where \noteblue{neither the feature diversity nor the topology diversity decreases after random dropping.} 

\begin{lemma}
\label{lamma45}
None of Dropout, DropEdge, and DropNode is able to keep information diversity. 
\end{lemma}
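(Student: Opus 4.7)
The plan is to go through each of the three methods and exhibit, directly from the equivalent message-matrix action established in Lemma \ref{Lamma1}, a setting in which either the feature diversity $\textit{FD}_G$ or the topology diversity $\textit{TD}_G$ strictly decreases after dropping. Since "keeping information diversity" requires neither quantity to decrease, a single witness suffices for each method. Throughout, I would assume a non-degenerate graph in which, prior to dropping, every feature entry involved is nonzero and every directed edge carries at least one nonzero message coordinate, so that $\textit{FD}_G$ and $\textit{TD}_G$ attain their maximum possible values $|\mathbf{V}|\cdot c$ and $k$ respectively. This lets me isolate the effect of the dropping step itself without worrying about pre-existing zeros.

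First, for \textbf{Dropout}, the equivalent action on $\mathbf{M}$ from Lemma \ref{Lamma1} masks every element $\mathbf{M}_{i,j}$ whose source lies in $X_{\text{drop}}$. If a single feature entry $\mathbf{X}_{v,l}$ is dropped (which occurs with positive probability $\delta$), then for every row of $\mathbf{M}$ indexed by $SN(v)$ the $l$-th coordinate is set to zero, so $\|M_{SN(v),l}\|_0$ collapses to $0$. Hence the pair $(v,l)$ drops out of the set counted by $\textit{FD}_G$, strictly decreasing feature diversity. Second, for \textbf{DropEdge}, the equivalent action zeroes out an entire row $M_i$ of the message matrix whenever the corresponding edge is sampled into $E_{\text{drop}}$. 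Then $\|M_i\|_0 = 0$, so index $i$ disappears from the set counted by $\textit{TD}_G$, strictly decreasing topology diversity. Third, for \textbf{DropNode}, dropping a node $v$ zeroes every row of $\mathbf{M}$ indexed by $SN(v)$; this simultaneously makes $\|M_j\|_0 = 0$ for each such $j$ (reducing $\textit{TD}_G$) and makes $\|M_{SN(v),l}\|_0 = 0$ for every dimension $l$ (reducing $\textit{FD}_G$).

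To package these three cases as a single argument, I would note that each method's mask satisfies a structural constraint on $\mathbf{M}$: Dropout zeros all entries in a feature-column slice $SN(v)\times\{l\}$ jointly, DropEdge zeros an entire row jointly, and DropNode zeros an entire row block $SN(v)\times[0,c)$ jointly. In each case the joint-zeroing pattern is exactly what the definitions of $\textit{FD}_G$ and $\textit{TD}_G$ detect via the zero norm, so the corresponding diversity quantity strictly drops whenever the underlying Bernoulli trial fires on any element. Because $\delta>0$, each method triggers such a decrease with positive probability, which contradicts the "neither quantity decreases" requirement from Definition \ref{def44}.

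The main obstacle I anticipate is making precise what "decrease" means under the stochastic dropping: the lemma presumably asserts that these methods \emph{can} lose diversity, so I would interpret the claim as "there exist graph configurations and outcomes of the Bernoulli trials, occurring with positive probability, for which diversity strictly decreases," in contrast to DropMessage (to be discussed next in the paper) which preserves the coarser source-and-edge support as long as not every coordinate along a given row or source-slice is simultaneously dropped. A secondary subtlety is guarding against the trivial case where the affected feature or edge carried only zeros to begin with; the non-degeneracy assumption above handles this, and I would state it explicitly at the top of the proof.
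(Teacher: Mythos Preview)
Your proposal is correct and follows essentially the same case-by-case argument as the paper: Dropout kills a feature-column slice and hence $\textit{FD}_G$, DropEdge kills whole message rows and hence $\textit{TD}_G$, and DropNode kills both. The paper's version is terser and adds only one small refinement you omit---for an undirected graph, dropping one edge zeroes \emph{two} rows of $\mathbf{M}$ (one per direction), so $\textit{TD}_G$ drops by $2$---but your non-degeneracy caveat and explicit stochastic interpretation are welcome rigor the paper leaves implicit.
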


According to Definition \ref{def44}, when we drop an element of the feature matrix $\mathbf{X}$, all corresponding elements in the message matrix are masked and the feature diversity is decreased by $1$.
When we drop an edge in adjacency matrix, the corresponding two rows for undirected graphs in the message matrix are masked and the topology diversity is decreased by $2$. 
Similarly, when we drop a node, \emph{i.e.}, a row in the feature matrix, elements in the corresponding rows of the message matrix are all masked. 
Both the feature diversity and the topology diversity are therefore decreased. 
Thus, for all of these methods, their feature and topology information cannot be completely recovered by propagated messages, leading to the loss of information diversity.

\begin{theorem}
\label{th46}
DropMessage can keep information diversity in expectation when $\delta_{i} \leq 1-min(\frac{1}{d_{i}},\frac{1}{c})$, where $\delta_{i}$ is the dropping rate for node $v_i$, 
$d_{i}$ is the out-degree of $v_i$, and $c$ is the feature dimension.
\end{theorem}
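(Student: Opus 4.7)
The plan is to analyze feature diversity and topology diversity separately, using linearity of expectation on the independent Bernoulli masks that DropMessage places on each entry of the message matrix. By Lemma~\ref{Lamma1}, DropMessage masks entries of $\mathbf{M}$ independently, and for every row of $\mathbf{M}$ sourced from $v_i$ the per-coordinate survival probability is $1-\delta_i$. The central observation I will use is that each ``diversity slot''---a pair $(v_i,l)$ for $FD_G$ or a row $j$ for $TD_G$---is preserved in expectation precisely when the expected number of surviving entries in its associated slice of $\mathbf{M}$ is at least one, at which point linearity of expectation can be summed to yield the aggregate claim.

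First I would handle feature diversity. Fix a source node $v_i$ of out-degree $d_i$ and a feature dimension $l \in [0,c)$. The slice $\mathbf{M}_{SN(v_i),l}$ has exactly $d_i$ entries---one per outgoing edge of $v_i$---each kept independently with probability $1-\delta_i$. Linearity of expectation then gives $E[\|\mathbf{M}_{SN(v_i),l}\|_0] = d_i(1-\delta_i)$, so the feature-diversity indicator at slot $(v_i,l)$ is preserved in expectation whenever $d_i(1-\delta_i) \ge 1$, equivalently $\delta_i \le 1 - 1/d_i$.

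Next I would treat topology diversity in a parallel fashion. Fix any directed edge $j$ sourced from $v_i$; the row $\mathbf{M}_j$ has $c$ coordinates each surviving with probability $1-\delta_i$, so $E[\|\mathbf{M}_j\|_0] = c(1-\delta_i)$, and the topology indicator is preserved in expectation whenever $\delta_i \le 1 - 1/c$. Combining the two per-node requirements in the compact form $1-\delta_i \ge \min(1/d_i, 1/c)$ delivers the threshold stated in the theorem, and summing the per-slot inequalities over all source nodes $v_i \in \mathbf{V}$, feature dimensions $l \in [0,c)$, and directed edges $j \in [0,k)$ yields $E[FD_G] \ge FD_G^{\text{before}}$ and $E[TD_G] \ge TD_G^{\text{before}}$ simultaneously, which is exactly the ``keeps information diversity in expectation'' claim.

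The hard part, and really the only subtle part, is pinning down the correct notion of ``preserved in expectation'' for an integer-valued indicator like $\|\cdot\|_0 \ge 1$: under the working interpretation above---\emph{expected survivors per slice at least one}---both threshold inequalities fall out of elementary Bernoulli counting, and the contrast with Lemma~\ref{lamma45} becomes transparent, since Dropout, DropEdge, and DropNode each zero out an entire row or column of $\mathbf{M}$ at once and therefore cannot maintain any positive expected count in a slice whose coarse containing unit has already been dropped. Once this interpretation is in place, the proof is essentially a per-source-node calculation repeated in two directions (column slice versus row slice), with the two resulting thresholds combined to produce the bound $\delta_i \le 1 - \min(1/d_i, 1/c)$.
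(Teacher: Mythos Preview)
Your proof takes essentially the same approach as the paper: treat feature and topology diversity separately, compute the expected number of surviving entries in each relevant slice of $\mathbf{M}$ (giving $(1-\delta_i)d_i\ge 1$ and $(1-\delta_i)c\ge 1$ respectively), and then combine the two thresholds into the form stated in the theorem. The paper's argument is terser but identical in substance---including the same final combination step via $\min$---so there is no meaningful divergence to report.
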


\begin{proof}
DropMessage conducts random dropping directly on message matrix $\mathbf{M}$. To keep the diversity of the topology information, we expect that at least one element of each row in message matrix $\mathbf{M}$ can be preserved in expectation: 
\begin{equation}
\label{delta1}
E(|\mathbf{M}_{f}|)\geq 1\Rightarrow (1-\delta)c \geq 1 \Rightarrow \delta \leq 1-\frac{1}{c}
\end{equation}

To keep the diversity of the feature information, we expect that for every element in the feature matrix $\mathbf{X}$, at least one of its corresponding elements in the message matrix $\mathbf{M}$ is preserved in expectation: 
\begin{align}
\label{delta2}
E(|\mathbf{M}_{e}|)\geq 1\Rightarrow (1-\delta_{i})d_{i} \geq 1 \Rightarrow \delta_{i} \leq 1-\frac{1}{d_{i}}
\end{align}


Therefore, to keep the information diversity, the dropping rate $\delta_{i}$ should satisfy both Equation \ref{delta1} and Equation \ref{delta2} as 
\begin{equation}
\label{equation7}
\delta_{i} \leq 1-min(\frac{1}{d_{i}},\frac{1}{c})    
\end{equation}

\end{proof}
From the perspective of information theory, 
a random dropping method with the capability of keeping information diversity can preserve more information and theoretically perform better than those without such capability.
Thus, it can explain why our method performs better than those existing dropping methods.
Actually, we may only set one dropping rate $\delta$ for the whole graph rather than for each node in practice.
\noteblue{
Consequently, both DropMessage and other methods may lose some information.
However, DropMessage still preserves more information than other methods with the same dropping rate even under this circumstance.
It is demonstrated that DropMessage remains its advantage in real-world scenarios.
}

\section{Experiments}

\subsection{Experimental Setup}
We empirically validate the effectiveness and adaptability of our proposed DropMessage in this section. 
In particular, we explore the following questions: 
1) Does DropMessage outperform other random dropping methods on GNNs? 
2) Could DropMessage further improve the robustness and training efficiency of GNNs? 
3) Does information diversity (described in Definition \ref{def44}) matter in GNNs? 

\vpara{Datasets.} We employ 7 graph datasets in our experiments, including 5 public datasets \textit{Cora, CiteSeer, PubMed, ogbn-arxiv, Flickr} and 2 industrial datasets \textit{FinV, Telecom}. 

\begin{itemize}
    \item {\textit{Cora, CiteSeer, PubMed, ogbn-arxiv:}} 
    These 4 different citation networks are widely used as graph benchmarks~\cite{sen2008collective,hu2020ogb}. We conduct node classification tasks on each dataset to determine the research area of papers/researchers.
    We also consider link prediction on the first three graphs to predict whether one paper cites another.  
    \item {\textit{Flickr:}} It is provided by Flickr, the largest photo-sharing website~\cite{Zeng2020GraphSAINTGS}. One node in the graph represents one image uploaded to Flickr. If two images share some common properties (\emph{e.g.}, same geographic location, same gallery, or comments by the same user), an edge between the nodes of these two images will appear.
    We conduct the node classification task that aims to categorize these images into 7 classes determined by their tags. 
    \item {\textit{FinV, Telecom:}} These are two real-world mobile communication networks provided by FinVolution Group \cite{Yang2019UnderstandingDB} and China Telecom \cite{Yang2021MiningFA}, respectively. 
    In the two datasets, nodes represent users, and edges indicate the situation where two users have communicated with each other at a certain frequency. The task is to identify whether a user is a default borrower or a telecom fraudster.
\end{itemize}

\begin{table*}[htbp]
    \centering
    \caption{Comparison results of different random dropping methods. 
    The best results are in bold, while the second-best ones are underlined.
    }
    \label{tab:result_overall}
    \resizebox{1\textwidth}{!}{
    \begin{tabular}{c|cccc||ccc|||ccc}
    \toprule
    \multirow{2}{*}{\diagbox [width=11em, trim=lr] {Model}{Task \& Dataset}} & \multicolumn{7}{c|||}{Node classification} & \multicolumn{3}{c}{Link prediction}\\
    \cmidrule{2-8} \cmidrule{9-11}
    ~ & Cora & CiteSeer & PubMed & ogbn-arxiv & Flickr & Telecom & FinV & Cora & CiteSeer & PubMed\\
    \midrule
    GCN & $80.68$ & $70.83$ & $78.97$ & $70.08$ & $0.5188$ & $0.6080$ & $0.4220$ & $0.9198$ & $0.8959$ & $0.9712$\\
    GCN-Dropout & $\underline{83.16}$ & $71.48$ & $\underline{79.13}$ & $\underline{71.16}$ & $\underline{0.5222}$ & $0.6601$ & $0.4526$ & $0.9278$ & $\textbf{0.9107}$ & $\underline{0.9766}$\\
    GCN-DropEdge & $81.69$ & $71.43$ & $79.06$ & $70.88$ & $0.5214$ & $\underline{0.6650}$ & $\underline{0.4729}$ & $\underline{0.9295}$ & $0.9067$ & $0.9762$\\
    GCN-DropNode & $83.04$ & $\textbf{72.12}$ & $79.00$ & $70.98$ & $0.5213$ & $0.6243$ & $0.4571$ & $0.9238$ & $0.9052$ & $0.9748$\\
    GCN-DropMessage & $\textbf{83.33}$ & $\underline{71.83}$ & $\textbf{79.20}$ & $\textbf{71.27}$ & $\textbf{0.5223}$ & $\textbf{0.6710}$ & $\textbf{0.4876}$ & $\textbf{0.9305}$ & $\underline{0.9071}$ & $\textbf{0.9772}$\\
    \midrule
    GAT & $81.35$ & $70.14$ & $77.20$ & $70.32$ & $0.4988$ & $0.7050$ & $0.4467$ & $0.9118$ & $0.8895$ & $0.9464$\\
    GAT-Dropout & $\textbf{82.41}$ & $71.31$ & $\textbf{78.31}$ & $\textbf{71.28}$ & $0.4998$ & $0.7382$ & $0.4539$ & $0.9182$ & $0.9055$ & $0.9536$\\
    GAT-DropEdge & $81.82$ & $71.17$ & $77.70$ & $70.67$ & $\underline{0.5004}$ & $\underline{0.7568}$ & $\textbf{0.4896}$ & $0.9206$ & $0.9037$ & $0.9493$\\
    GAT-DropNode & $82.08$ & $\underline{71.44}$ & $77.98$ & $70.96$ & $0.4992$ & $0.7214$ & $0.4647$ & $\textbf{0.9224}$ & $\textbf{0.9104}$ & $\textbf{0.9566}$\\
    GAT-DropMessage & $\underline{82.20}$ & $\textbf{71.48}$ & $\underline{78.14}$ & $\underline{71.13}$ & $\textbf{0.5013}$ & $\textbf{0.7574}$ & $\underline{0.4861}$ & $\underline{0.9216}$ & $\underline{0.9076}$ & $\underline{0.9553}$\\
    \midrule
    APPNP & $81.45$ & $70.62$ & $79.79$ & $69.11$ & $0.5047$ & $0.6217$ & $0.3952$ & $0.9058$ & $0.8844$ & $0.9531$\\
    APPNP-Dropout & $82.23$ & $71.93$ & $\underline{79.92}$ & $\underline{69.36}$ & $0.5055$ & $0.6578$ & $0.4023$ & $0.9119$ & $0.9071$ & $0.9611$\\
    APPNP-DropEdge & $\textbf{82.75}$ & $\underline{72.10}$ & $79.83$ & $69.15$ & $\underline{0.5061}$ & $\underline{0.6591}$ & $0.4149$ & $\underline{0.9139}$ & $\underline{0.9131}$ & $\underline{0.9626}$\\
    APPNP-DropNode & $81.79$ & $71.50$ & $79.81$ & $69.27$ & $0.5053$ & $0.6412$ & $\underline{0.4182}$ & $0.9068$ & $0.8979$ & $0.9561$\\
    APPNP-DropMessage & $\underline{82.37}$ & $\textbf{72.65}$ & $\textbf{80.04}$ & $\textbf{69.72}$ & $\textbf{0.5072}$ & $\textbf{0.6619}$ & $\textbf{0.4378}$ & $\textbf{0.9165}$ & $\textbf{0.9141}$ & $\textbf{0.9634}$\\
    \bottomrule
    \end{tabular}
    }
\end{table*}

\vpara{Baseline methods.} We compare our proposed DropMessage with other existing random dropping methods, including Dropout~\cite{Hinton2012ImprovingNN}, DropEdge~\cite{rong2019dropedge}, and DropNode~\cite{feng2020graph}. 
We adopt these dropping methods on various GNNs as the backbone model, and compare their performances on different datasets. 

\vpara{Backbone models.} 
In this paper, we mainly consider three mainstream GNNs as our backbone models: GCN~\cite{kipf2016semi}, GAT~\cite{velivckovic2017graph}, and APPNP~\cite{Klicpera2019PredictTP}.
We take the official practice of these methods while make some minor modifications.
All these backbone models have random dropping modules for different steps in their model implementation. For instance, GAT models perform random dropping after self-attention calculation, while APPNP models perform random dropping at the beginning of each iteration. For a fair comparison, we unify the implementation of random dropping modules in the same step for different backbone models. We fix Dropout, DropEdge, and DropNode on the initial input and fix DropMessage at the start point of the message propagation process.



\subsection{Comparison Results}

Table \ref{tab:result_overall} summarizes the overall results. 
For the node classification task, the performance is measured by accuracy on four public datasets (Cora, CiteSeer, PubMed, ogbn-arxiv).  
As for Flickr and two imbalanced industrial datasets, we employ F1 scores. 
When it comes to the link prediction task, we calculate the AUC values for comparisons. 
Considering the space limitation, the std values of the experimental results are presented in the Appendix.

\vpara{Effect of random dropping methods.} It is observed that random dropping methods consistently outperform GNNs without random dropping in both node classification and link prediction. 
Besides, we see that the effects of random dropping methods vary over different datasets, backbone models, and downstream tasks. 
For example, random dropping methods on APPAP obtain an average accuracy improvement of 1.4\% on CiteSeer, while 0.1\% on PubMed. 
Meanwhile, random dropping methods achieve 2.1\% accuracy improvement for GCN on Cora, while only 0.8\% for GAT. 

\vpara{Comparison of different dropping methods.} 
\noteblue{Our proposed DropMessage works well in all settings, exhibiting its strong adaptability to various scenarios.}
Overall, we have 21 settings under the node classification task, each of which is a combination of different backbone models and datasets (\emph{e.g.}, GCN-Cora). 
It is showed that DropMessage achieves the optimal results in 15 settings, and gets sub-optimal results in the rest. 
As to 9 setttings under the link prediction task, DropMessage achieves the optimal results in 5 settings, and sub-optimal results in the rest. 
Moreover, the stable performance of DropMessage over all datasets compared to other methods is clearly presented. 
Taking DropEdge as the counterexample, it appears strong performance on industrial datasets but demonstrates a clear drop on public ones. 
A reasonable explanation is that the message matrix patterns reserved by distinct mask methods vary from each other as presented in Table \ref{tab:methods}. 
With the favor of its finest-grained dropping strategy, DropMessage obtains smaller inductive bias.
Thus, compared with other methods, DropMessage is more applicable in most scenarios.


\begin{figure*}[htp!]
\centering
\subfloat[MADGap]{
\includegraphics[width=0.30\textwidth]{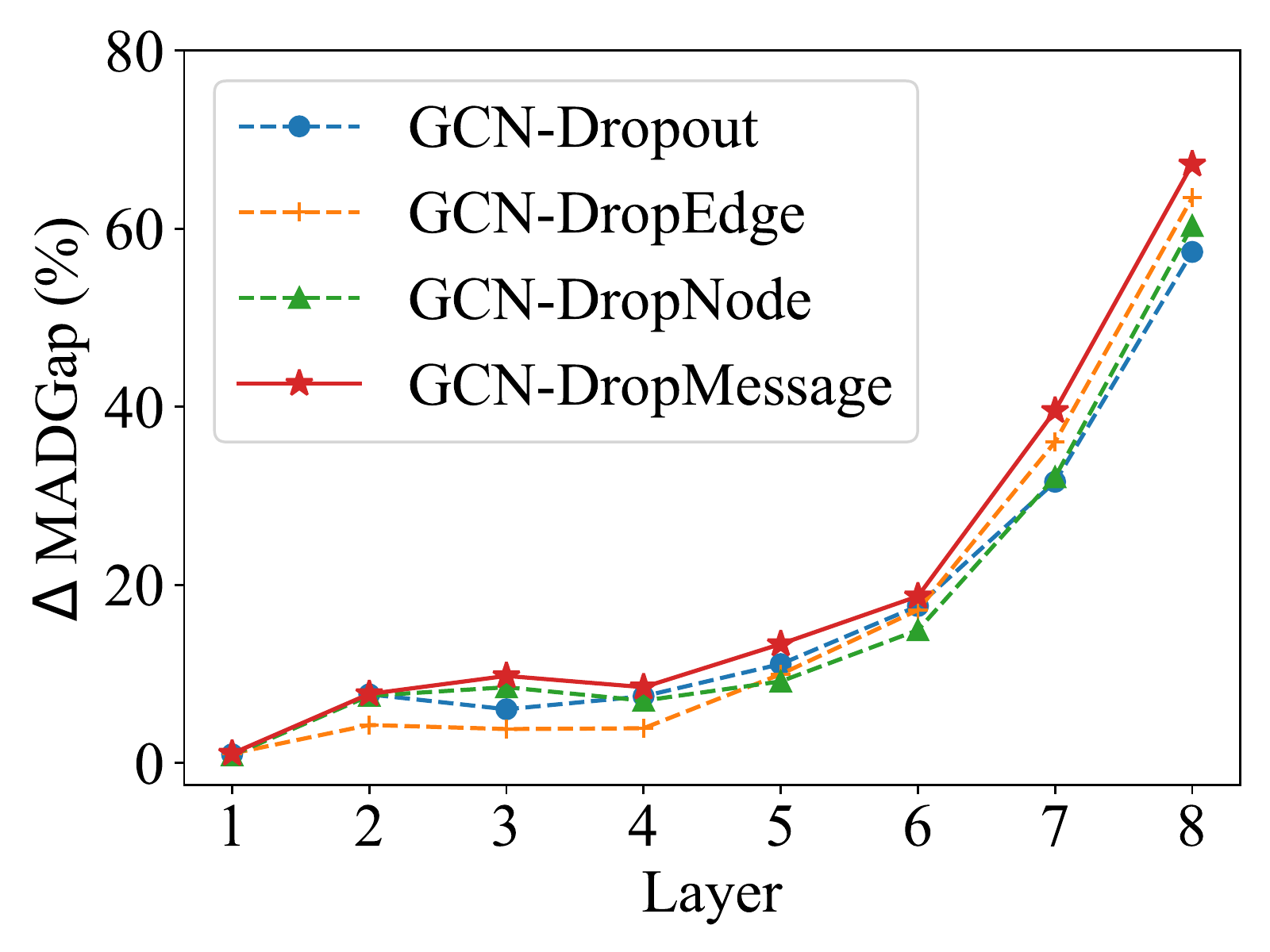}
\label{fig:oversmoothing1}
}
\hspace{0.01\textwidth}
\subfloat[Test Accuracy]{
\includegraphics[width=0.30\textwidth]{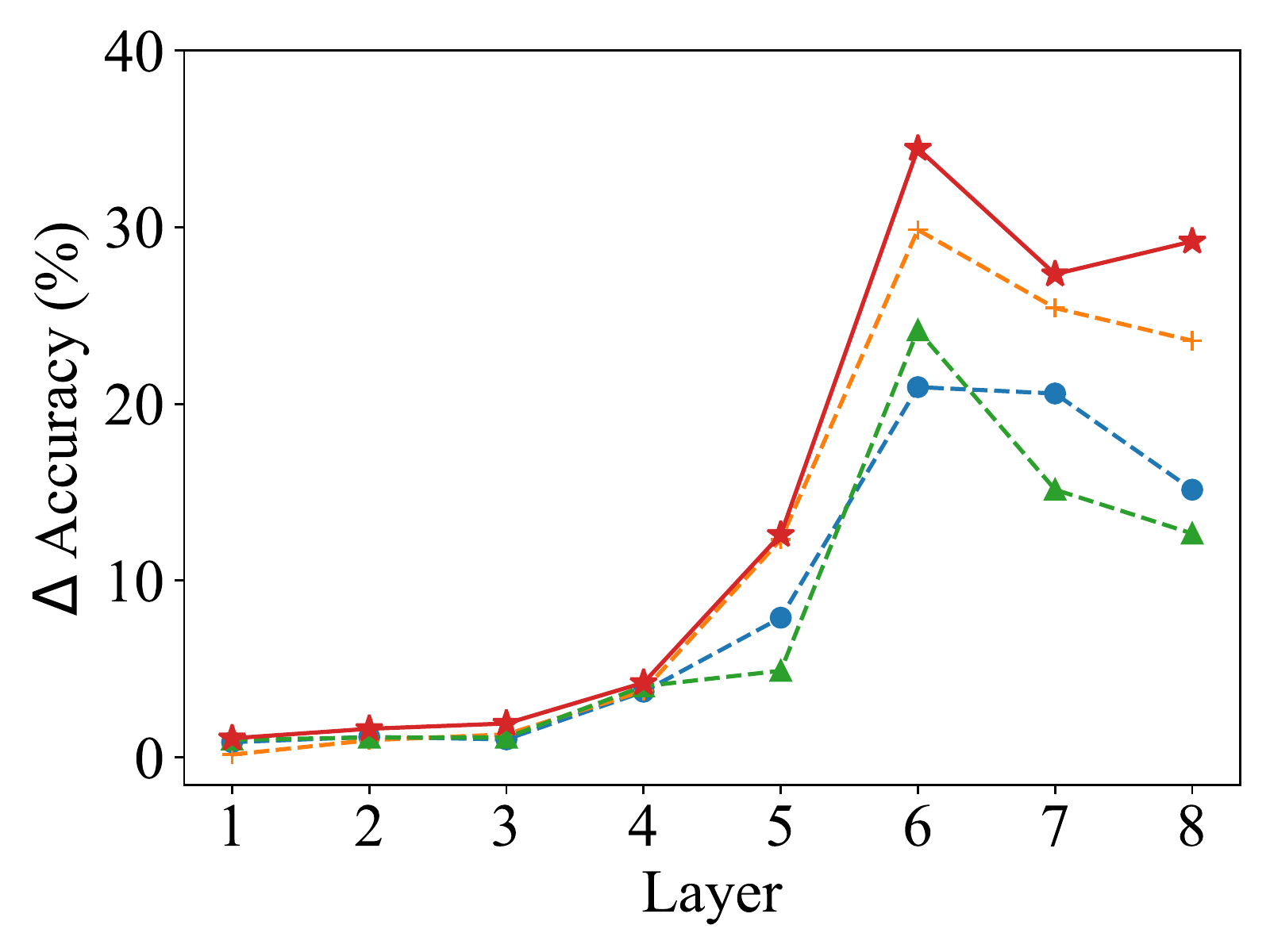}
\label{fig:oversmoothing2}
}
\hspace{0.01\textwidth}
\subfloat[Training Loss]{
\includegraphics[width=0.30\textwidth]{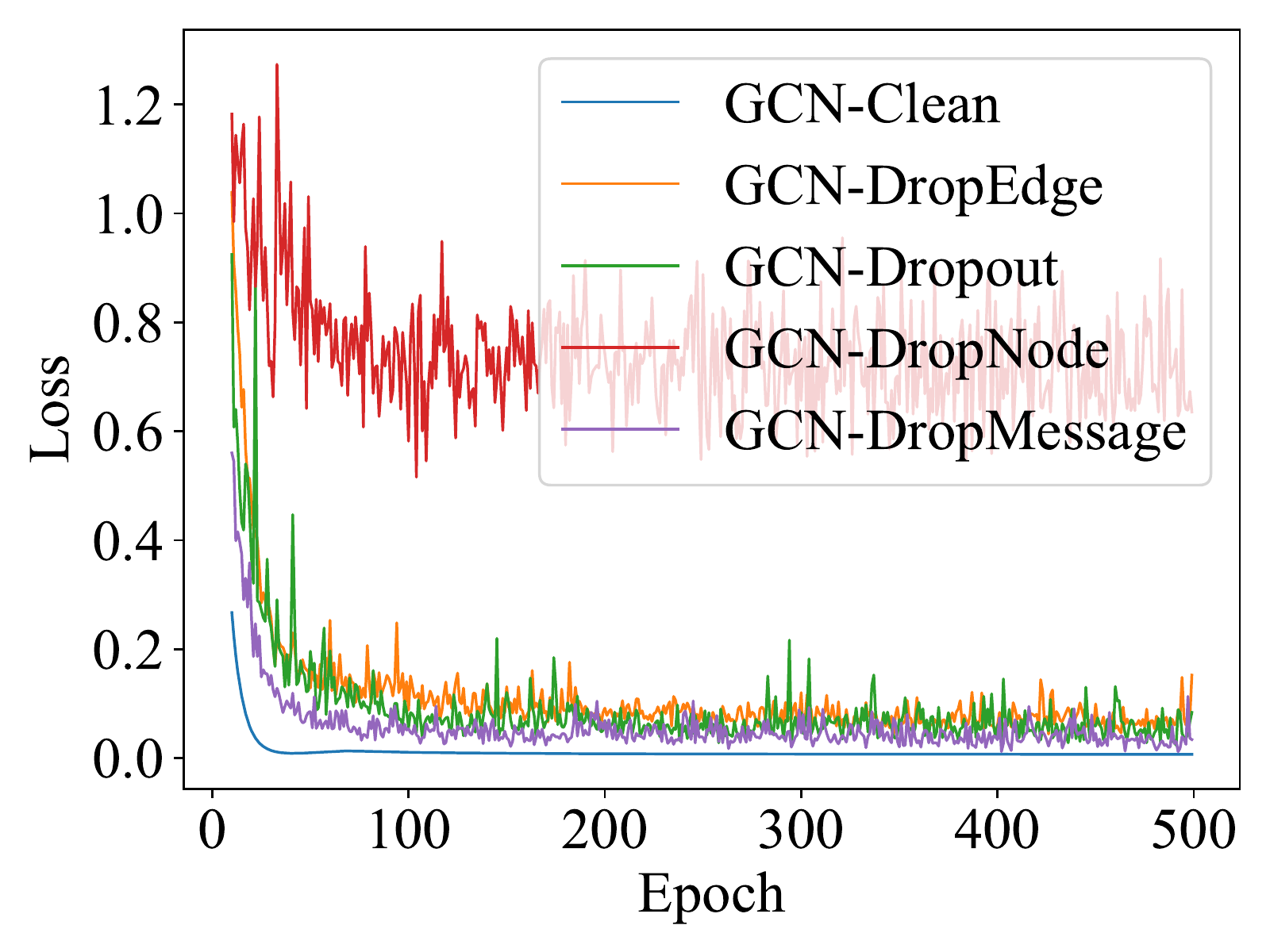}
\label{fig:training_process}
}
\caption{
Over-Smoothing and Training Process Analysis.
}
\end{figure*}

\subsection{Additional Results}
\vpara{Robustness analysis.}
We study the robustness of dropping methods through measuring their ability of handling perturbed graphs. \noteblue{To guarantee that the initial data is comparatively clean}, we conduct experiments on three citation networks: Cora, CiteSeer, and PubMed. We randomly add a certain ratio of edges into these datasets and perform the node classification.
We find that all the random dropping methods have positive effects when the perturbation rate increases from 0\% to 30\%. The average improvement in the case of 30\% perturbation reached 37\% compared to that without perturbation, which indicates that the random dropping methods strengthen the robustness of GNN models. 
Besides, our proposed DropMessage shows its versatility and outperforms other dropping methods in noisy situations. 
\noteblue{Detailed results are exhibited in Appendix}.

\vpara{Over-smoothing analysis.}
Over-smoothing is a common issue on GNNs \cite{Li2018DeeperII}, which implies that the node representations become indistinguishable as the network depth increases.
In this part, we evaluate the effects that various random dropping methods exert on this issue, and measure the degree of over-smoothing by MADGap \cite{Chen2020MeasuringAR}. It should be noted that here a smaller value indicates the more indistinguishable node representations and vice versa.
Experiments are conducted on Cora with GCNs serving as backbone models. 
Figure \ref{fig:oversmoothing1} and Figure \ref{fig:oversmoothing2} show the relative increase of MADGap values and test accuracies of the final node representations compared to the original model without any random dropping techniques.
The results indicate that all these random dropping methods can alleviate over-smoothing by increasing the MADGap values and test accuracies when the depth of the model increases. 
Among all random dropping methods, our proposed DropMessage exhibits a superiority of consistency.
It obtains an average improvement of 3.3\% on MADGap values and an average improvement of 4.9\% on test accuracies compared to other random dropping methods when the layer number $l\geq3$.
This result can be explained by the fact that DropMessage can generate more various messages than other methods, which prevents the nodes from converging to the same representations to some extent.
\modified{A more detailed theoretical explanation can be found in Appendix.}

\vpara{Training process analysis.}
We conduct experiments to analyze the loss during the training process when employing different random dropping methods. 
Figure \ref{fig:training_process} shows the change of loss in GCN training processes when employing different random dropping methods on Cora.
Furthermore, the similar training loss curves can be drawn under other experimental settings.
The experimental results suggest that DropMessage presents the smallest sample variance among all methods, thus achieving the fastest convergence and the most stable performance.
This is consistent with the theoretical results in Section \ref{section42}.

\vpara{Information diversity analysis.}
\begin{table}[t]
\small
\renewcommand\arraystretch{1.5}
    \caption{Classification accuracy (\%) for information diversity analysis (where AVG denotes average, and NW denotes nodewise).}
    \label{tab:result_info}
    \centering
    \resizebox{0.45\textwidth}{!}{
    \begin{tabular}{|c|c|c|c|c|c|c|}
    \hline
    \multirow{2}{*}{Model} 
    & \multicolumn{2}{c|}{GCN} 
    & \multicolumn{2}{c|}{GAT} 
    & \multicolumn{2}{c|}{APPNP}\\
    \cline{2-7}
    ~ & AVG & NW & AVG & NW & AVG & NW\\
    \hline
    Accuracy (\%) & 81.62 & 82.67 & 80.81 & 81.61 & 80.71 & 81.56\\
    \hline
    \end{tabular}
    }
\normalsize
\end{table}
We conduct experiments to evaluate the importance of information diversity for message-passing GNN models. We set Cora as our experimental dataset, which contains 2708 nodes and 5429 edges. 
The average node degree of Cora is close to 4.
According to Equation \ref{equation7}, the upper bound of dropping rate is calculated from the node degree and the feature dimension. The feature dimension number of Cora is 1433, which is much larger than the number of node degree. Therefore, the upper bound is only determined by the degree of the node. 
In (backbone)-nodewise settings, we set the dropping rate to be equal to its upper bound $\delta_{i}=1-\frac{1}{d_{i}}$ for each node. In (backbone)-average settings, we set the dropping rate $\delta_{i}=0.75+\epsilon_{i}$, where $\epsilon_{i} \sim Uniform (-0.15,0.15)$. 
Both of these settings employ DropMessage. 
The average random dropping rate of all nodes is almost identical under these two settings, but only the former one can keep the information diversity in expectation.
Table \ref{tab:result_info} presents the results.
The (backbone)-nodewise settings outperform (backbone)-average settings regardless of which backbone model is selected. 

\section{Conclusion}
In this paper, we propose DropMessage, a general random dropping method for message-passing GNN models. 
We first unify all random dropping methods to our framework via performing dropping on the message matrix and analyzing their effects. 
Then we illustrate the superiority of DropMessage theoretically in stabilizing the training process and keeping information diversity.
Due to its fine-grained dropping operations on the message matrix, DropMessage shows greater applicability in most cases.
By conducting experiments for multiple tasks on five public datasets and two industrial datasets, we demonstrate the effectiveness and generalization of our proposed method.

\section{Acknowledgments}
This work was partially supported by NSFC (62176233), the National Key Research and Development Project of China (2018AAA0101900), and the Fundamental Research Funds for the Central Universities.

\bibliography{aaai23.bib}

\onecolumn{\section{Appendix}

\subsection{Derivation Details}








\vpara{Detailed proof of Theorem 1.}

\vpara{Theorem 1} \textit{Unbiased random dropping methods introduce an extra regularization term into the objective functions, which make the models more robust.}

We give more derivation details of Theorem 1.
When we use cross-entropy as the loss function, the objective function can be expressed as follows:
\begin{align}
L_{CE}&=-\sum_{j,y_{j}=1}log(z_{j})-\sum_{k,y_{k}=0}log(1-z_{k})\nonumber \\
&=\sum_{j,y_{j}=1}log(1+e^{-h_{j}})+\sum_{k,y_{k}=0}log(1+e^{h_{k}})\nonumber
\end{align}

According to above equation, the initial objective function is $L_{CE}=\sum_{j,y_{j}=1}log(1+e^{-h_{j}})+\sum_{k,y_{k}=0}log(1+e^{h_{k}})$.
When we perturb the message matrix, the objective function can be regarded as a process of adding a bias to the original function, expressed as follows:
\begin{align}
E(\widetilde{L}_{CE})&=\sum_{j,y_{j}=1}[log(1+e^{-h_{j}})+E(f(\tilde{h}_{j},h_{j}))]\nonumber\\
&+\sum_{k,y_{k}=0}[log(1+e^{h_{k}})+E(g(\tilde{h}_{k},h_{k}))]\nonumber
\end{align}
where $f(\tilde{h}_{j},h_{j})=log(1+e^{-\tilde{h}_{j}})-log(1+e^{-h_{j}})$, and $g(\tilde{h}_{k},h_{k})=log(1+e^{\tilde{h}_{k}})-log(1+e^{h_{k}})$.
We can approximate it with the second-order Taylor expansion of $f(.)$ and $g(.)$ around $h_{i}$. Thus, the objective function in expectation can be expressed as bellow:
\begin{align}
&E(\widetilde{L}_{CE})=L_{CE}\nonumber \\
&+E(\sum_{j,y_{j}=1}[(-1+z_{j})(\tilde{h}_j-h_{j})+\frac{1}{2}z_{j}(1-z_{j})(\tilde{h}_{j}-h_{j})^{2}]) \nonumber \\
&+E(\sum_{k,y_{k}=0}[z_{k}(\tilde{h}_{k}-h_{k})+\frac{1}{2}z_{k}(1-z_{k})(\tilde{h}_{k}-h_{k})^{2}]) \nonumber \\
&=L_{CE}+\sum_{i}\frac{1}{2}z_{i}(1-z_{i})Var(\tilde{h}_{i})\nonumber
\end{align}

\vpara{Proof of Theorem 2.}

\vpara{Theorem 2} \textit{DropMessage presents the smallest sample variance among all existing random dropping methods on message-passing GNNs with the same dropping rate $\delta$.}

\begin{proof}
As stated in Lemma 1, all random dropping methods on graphs can be converted to masking operations on the message matrix $\mathbf{M}$. 
We can measure the difference of message matrices in different epochs by the way of comparing the sample variance of random dropping methods, which can be measured via the norm variance of the message matrix $|\mathbf{M}|_{F}$.
Without loss of generality, we assume the original message matrix $\mathbf{M}$ is $\mathbf{1}_{n\times n}$, \emph{i.e.}, every element is $1$. Thus, we can calculate its sample variance via the $1$-norm of the message matrix.

We consider that the message-passing GNNs do not possess the node-sampler or the edge-sampler, which means every directed edge corresponds to a row vector in the message matrix $\mathbf{M}$. 
For analytical simplicity, we assume that the graph is undirected and the degree of each node is $d$.
In this case, $k = 2|E| = nd$ rows of the message matrix counts in total.
All random dropping methods can be considered as multiple independent Bernoulli samplings. The whole process conforms to a binomial distribution, and so we can calculate the variance of $|\mathbf{M}|$.

\vpait{Dropout.} 
Perform $nc$ times of Bernoulli sampling.
Dropping an element in the feature matrix leads to masking $d$ elements in the message matrix. 
Its variance can be calculated by $Var_{do}(\mathbf{|M|})=(1-\delta)\delta ncd^{2}$.

\vpait{DropEdge.} 
Perform $\frac{nd}{2}$ times of Bernoulli sampling. 
Dropping an element in the adjacency matrix leads to masking $2c$ elements in the message matrix. 
Its variance can be calculated by $Var_{de}(\mathbf{|M|})=2(1-\delta)\delta nc^{2}d$.

\vpait{DropNode.} 
Perform $n$ times of Bernoulli sampling. 
Dropping an element in the node set leads to masking $cd$ elements in the message matrix. 
Its variance can be calculated by $Var_{dn}(\mathbf{|M|})=(1-\delta)\delta nc^{2}d^{2}$.

\vpait{DropMessage.} 
Perform $ncd$ times of Bernoulli sampling. 
Dropping an element in the message matrix leads to masking $1$ elements in the message matrix. 
Its variance can be calculated by $Var_{dm}(\mathbf{|M|})=(1-\delta)\delta ncd$.

Therefore, the variances of the random dropping methods are sorted as follows:
\begin{align}
Var_{dm}(|\mathbf{M}|)\leq Var_{do}(|\mathbf{M}|)\leq Var_{dn}(|\mathbf{M}|) \nonumber
\end{align}
\vspace{-0.5cm}
\begin{align}
Var_{dm}(|\mathbf{M}|)\leq Var_{de}(|\mathbf{M}|) \nonumber
\end{align}
Our DropMessage has the smallest sample variance among all existing random dropping methods.
\end{proof}

\subsection{Experiment Details}

\vpara{Hardware spcification and environment.} 
We run our experiments on the machine with Intel Xeon Gold CPUs (6240@2.6Ghz), ten NVIDIA GeForce 2080ti GPUs (11GB).
The code is written in Python 3.8, Pytorch 1.8, and Pytorch-Geometric 1.7.



\vpara{Dataset statistics.} Table \ref{tab:statistics} shows the statistics of datasets.

\begin{table*}[h]
	\centering
	\caption{Dataset Statistics.}
	\label{tab:statistics}
	\begin{tabular}{cccccc}
	\toprule
	Dataset	& Nodes	& Edges	& Feature & Classes & Train/Val/Test\\
	\midrule
	Cora & 2708 & 5429 & 1433 & 7 & 140 / 500 / 1000\\
    CiteSeer & 3327 & 4732 & 3703 & 6 & 120 / 500 / 1000\\ 
	PubMed & 19717 & 44338 & 500 & 3 & 60 / 500 / 1000\\
	ogbn-arxiv & 169343 & 1166243 & 128 & 40 & 90941 / 29799 / 48603\\ 
	Flickr & 89250 & 899756 & 500 & 7 & 50\% / 25\% / 25\%\\
	FinV & 340751 & 1575498 & 261 & 2 & 60\% / 20\% / 20\%\\
	Telecom & 509304 & 809996 & 21 & 2 & 60\% / 20\% / 20\%\\
	\bottomrule
    \end{tabular}
\end{table*}



\vpara{Implementation details.} We conduct 20 independent experiments for each setting and obtain the average results.
On the five public datasets, we continue to employ the same hyper-parameter settings as previous works have proposed. And on the two real-world datasets, we obtain the best parameters through careful tuning. 
As for public datasets \textit{Cora}, \textit{CiteSeer}, \textit{PubMed}, and \textit{Flickr}, we apply two-layer models. 
However, when it comes to the public dataset \textit{ogbn-arxiv} and two industrial datasets, \textit{Telecom} and \textit{FinV}, we employ three-layer models with two batch normalization layers between the network layers.
These experimental settings are identical for node classification tasks and link prediction tasks.
The number of hidden units on GCNs is 16 for \textit{Cora}, \textit{CiteSeer}, \textit{PubMed}, and is 64 for others. For GATs, we apply eight-head models with 8 hidden units for \textit{Cora}, \textit{CiteSeer}, \textit{PubMed}, and use single-head models with 128 hidden units for the other datasets. As for APPNPs, we use the teleport probability $\alpha=0.1$ and $K=10$ power iteration steps. The number of hidden units on APPNPs is always 64 for all datasets.
In all cases, we use Adam optimizers with learning rate of $0.005$ and $L_{2}$ regularization of $5\times10^{-4}$, and train each model 200 epochs.
 We adjust the dropping rate from 0.05 to 0.95 in steps of 0.05 and select the optimal one for each setting. 
Table \ref{tab:drop_rate1} and \ref{tab:drop_rate2} present the optimal selections of dropping rates $\delta$.

\begin{table*}[h]
	\centering
	\caption{Statistics of Optimal Dropping Rates $\delta$ for Link Predictions.}
	\label{tab:drop_rate1}
	\begin{tabular}{c|ccccc}
	\toprule
	Dataset	& Dropout & DropEdge & DropNode & DropMessage\\
	\midrule
	GCN-Cora & 0.20 & 0.45 & 0.40 & 0.40\\
    GAT-Cora & 0.20 & 0.40 & 0.15 & 0.45\\
    APPNP-Cora & 0.50 & 0.35 & 0.05 & 0.35\\
    \midrule
    GCN-CiteSeer & 0.25 & 0.30 & 0.25 & 0.50\\
    GAT-CiteSeer & 0.25 & 0.50 & 0.15 & 0.50\\
    APPNP-CiteSeer & 0.50 & 0.50 & 0.10 & 0.40\\
    \midrule
    GCN-PubMed & 0.25 & 0.25 & 0.50 & 0.25\\
    GAT-PubMed & 0.50 & 0.15 & 0.25 & 0.50\\
    APPNP-PubMed & 0.20 & 0.25 & 0.05 & 0.20\\
	\bottomrule
    \end{tabular}
\end{table*}

\begin{table*}[h]
	\centering
	\caption{Statistics of Optimal Dropping Rates $\delta$ for Node Classifications.}
	\label{tab:drop_rate2}
	\begin{tabular}{c|ccccc}
	\toprule
	Dataset	& Dropout & DropEdge & DropNode & DropMessage\\
	\midrule
	GCN-Cora & 0.90 & 0.40 & 0.90 & 0.90\\
    GAT-Cora & 0.90 & 0.40 & 0.25 & 0.90\\
    APPNP-Cora & 0.85 & 0.70 & 0.15 & 0.80\\
    \midrule
    GCN-CiteSeer & 0.80 & 0.40 & 0.85 & 0.90\\
    GAT-CiteSeer & 0.35 & 0.90 & 0.45 & 0.90\\
    APPNP-CiteSeer & 0.85 & 0.45 & 0.35 & 0.80\\
    \midrule
    GCN-PubMed & 0.20 & 0.10 & 0.10 & 0.15\\
    GAT-PubMed & 0.60 & 0.70 & 0.35 & 0.85\\
    APPNP-PubMed & 0.80 & 0.50 & 0.10 & 0.75\\
    \midrule
    GCN-ogbn-arxiv & 0.20 & 0.20 & 0.15 & 0.25\\
    GAT-ogbn-arxiv & 0.20 & 0.20 & 0.15 & 0.20\\
    APPNP-ogbn-arxiv & 0.20 & 0.20 & 0.20 & 0.25\\
    \midrule
    GCN-Flickr & 0.15 & 0.20 & 0.15 & 0.20\\
    GAT-Flickr & 0.35 & 0.15 & 0.05 & 0.40\\
    APPNP-Flickr & 0.35 & 0.35 & 0.05 & 0.35\\
	\bottomrule
    \end{tabular}
\end{table*}

\vpara{Additional data for comparison results.}
For the node classification task, the performance is measured by accuracy on four public datasets (Cora, CiteSeer, PubMed, ogbn-arxiv).  
As for Flickr and two imbalanced industrial datasets, we employ F1 scores. 
When it comes to the link prediction task, we calculate the AUC values for comparisons.
Table \ref{tab:result_overall_std} present the std values of comparison results.
\begin{table*}[h]
    \centering
    \caption{The Std Values of Comparison Results.
    }
    \label{tab:result_overall_std}
    \resizebox{1\textwidth}{!}{
    \begin{tabular}{c|cccc||ccc|||ccc}
    \toprule
    \multirow{2}{*}{\diagbox [width=11em, trim=lr] {Model}{Task \& Dataset}} & \multicolumn{7}{c|||}{Node classification} & \multicolumn{3}{c}{Link prediction}\\
    \cmidrule{2-8} \cmidrule{9-11}
    ~ & Cora & CiteSeer & PubMed & ogbn-arxiv & Flickr & Telecom & FinV & Cora & CiteSeer & PubMed\\
    \midrule
    GCN & $0.37$ & $0.55$ & $0.67$ & $0.41$ & $0.0044$ & $0.0043$ & $0.0061$ & $0.0022$ & $0.0053$ & $0.0043$\\
    GCN-Dropout & $0.68$ & $0.59$ & $0.77$ & $0.52$ & $0.0050$ & $0.0067$ & $0.0069$ & $0.0044$ & $0.0073$ & $0.0055$\\
    GCN-DropEdge & $0.91$ & $0.72$ & $0.81$ & $0.54$ & $0.0061$ & $0.0055$ & $0.0077$ & $0.0045$ & $0.0077$ & $0.0080$\\
    GCN-DropNode & $1.06$ & $0.95$ & $0.88$ & $0.75$ & $0.0064$ & $0.0079$ & $0.0089$ & $0.0064$ & $0.0091$ & $0.0094$\\
    GCN-DropMessage & $0.59$ & $0.59$ & $0.53$ & $0.56$ & $0.0042$ & $0.0027$ & $0.0055$ & $0.0044$ & $0.0064$ & $0.0051$\\
    \midrule
    GAT & $0.59$ & $0.62$ & $0.53$ & $0.37$ & $0.0042$ & $0.0033$ & $0.0043$ & $0.0042$ & $0.0047$ & $0.0049$\\
    GAT-Dropout & $0.77$ & $0.71$ & $0.67$ & $0.49$ & $0.0063$ & $0.0046$ & $0.0067$ & $0.0054$ & $0.0062$ & $0.0066$\\
    GAT-DropEdge & $0.82$ & $1.02$ & $0.79$ & $0.54$ & $0.0077$ & $0.0052$ & $0.0055$ & $0.0079$ & $0.0082$ & $0.0087$\\
    GAT-DropNode & $0.89$ & $0.86$ & $0.88$ & $0.62$ & $0.0079$ & $0.0077$ & $0.0091$ & $0.0093$ & $0.0092$ & $0.0085$\\
    GAT-DropMessage & $0.69$ & $0.67$ & $0.47$ & $0.54$ & $0.0052$ & $0.0055$ & $0.0056$ & $0.0037$ & $0.0032$ & $0.0077$\\
    \midrule
    APPNP & $0.33$ & $0.34$ & $0.51$ & $0.33$ & $0.0024$ & $0.0032$ & $0.0053$ & $0.0038$ & $0.0017$ & $0.0069$\\
    APPNP-Dropout & $0.43$ & $0.54$ & $0.43$ & $0.29$ & $0.0032$ & $0.0059$ & $0.0072$ & $0.0063$ & $0.0046$ & $0.0042$\\
    APPNP-DropEdge & $0.72$ & $0.71$ & $0.86$ & $0.55$ & $0.0039$ & $0.0032$ & $0.0088$ & $0.0087$ & $0.0058$ & $0.0098$\\
    APPNP-DropNode & $0.49$ & $0.42$ & $0.66$ & $0.73$ & $0.0054$ & $0.0065$ & $0.0077$ & $0.0074$ & $0.0035$ & $0.0103$\\
    APPNP-DropMessage & $0.52$ & $0.24$ & $0.37$ & $0.38$ & $0.0044$ & $0.0039$ & $0.0044$ & $0.0065$ & $0.0041$ & $0.0032$\\
    \bottomrule
    \end{tabular}
    }
\end{table*}

\vpara{Results of robustness analysis.}
We conduct experiments for robustness analysis on three citation networks: Cora, CiteSeer, and PubMed.
Specifically, we randomly add a certain ratio of edges (0\%, 10\%, 20\%, 30\%) into these datasets and perform the node classification.
Table \ref{tab:result_perturb} summarizes the classification accuracy of robustness analysis.

\begin{table*}[htbp]
    \centering
    \caption{Classification Accuracy (\%) for Robustness Analysis.}
    \label{tab:result_perturb}
    \resizebox{0.98\textwidth}{!}{
    \begin{tabular}{c|cccc|cccc|cccc}
    \toprule
    \multirow{2}{*}{{\diagbox [width=8em, trim=lr] {Model}{Dataset}}} 
    & \multicolumn{4}{c}{Cora} 
    & \multicolumn{4}{c}{CiteSeer} 
    & \multicolumn{4}{c}{PubMed}\\
    \cmidrule(lr){2-13} 
    ~ & 0\% & 10\% & 20\% & 30\% & 0\% & 10\% & 20\% & 30\% & 0\% & 10\% & 20\% & 30\%\\
    \midrule
    GCN & 80.68 & 78.51 & 76.72 & 75.36 & 70.83 & 68.66 & 66.32 & 65.15 & 78.97 & 75.55 & 73.18 & 72.11\\
    GCN-Dropout & \underline{83.16} & \underline{80.97} & \underline{78.17} & \underline{76.83} & 71.48 & 69.86 & 67.35 & 66.08 & \underline{79.13} & \underline{76.94} & \underline{74.94} & \underline{74.07}\\
    GCN-DropEdge & 81.69 & 79.45 & 77.47 & 76.44 & 71.43 & 69.60 & 67.26 & 66.14 & 79.06 & 76.57 & 74.88 & 73.93\\
    GCN-DropNode & 83.04 & 80.13 & 78.12 & 76.72 & \textbf{72.12} & \textbf{70.51} & \textbf{68.21} & \textbf{66.94} & 79.00 & 76.74 & 74.71 & 73.86\\
    GCN-DropMessage & \textbf{83.33} & \textbf{81.04} & \textbf{79.09} & \textbf{77.26} & \underline{71.83} & \underline{70.08} & \underline{67.61} & \underline{66.49} & \textbf{79.20} & \textbf{77.10} & \textbf{75.02} & \textbf{74.11}\\
    \midrule
    GAT & 81.35 & 78.14 & 76.48 & 74.56 & 70.14 & 67.51 & 64.99 & 63.65 & 77.20 & 75.05 & 72.81 & 71.59\\
    GAT-Dropout & \textbf{82.41} & \textbf{80.20} & \textbf{78.71} & \textbf{77.23} & 71.31 & 68.38 & 66.84 & 64.92 & \textbf{78.31} & \underline{76.05} & \underline{74.14} & \underline{72.88}\\
    GAT-DropEdge & 81.82 & 79.08 & 76.92 & 75.32 & 71.17 & \underline{69.07} & \underline{67.21} & \underline{65.31} & 77.70 & 75.92 & 74.02 & 72.73\\
    GAT-DropNode & 82.08 & 78.80 & 76.98 & 75.84 & \underline{71.44} & 68.57 & 66.42 & 64.68 & 77.98 & 75.87 & 73.57 & 72.38\\
    GAT-DropMessage & \underline{82.20} & \underline{79.70} & \underline{78.11} & \underline{76.53} & \textbf{71.48} & \textbf{69.24} & \textbf{67.47} & \textbf{65.49} & \underline{78.14} & \textbf{76.20} & \textbf{74.22} & \textbf{72.97}\\
    \midrule
    APPNP & 81.45 & 77.75 & 75.61 & 73.54 & 70.62 & 65.76 & 62.60 & 60.92 & 79.79 & 75.29 & 72.77 & 71.06\\
    APPNP-Dropout & 82.23 & 79.06 & 76.55 & 74.30 & 71.93 & 66.55 & 63.22 & 61.61 & \underline{79.92} & 76.45 & 74.12 & 72.17\\
    APPNP-DropEdge & \textbf{82.75} & \underline{78.90} & \textbf{76.63} & \textbf{74.68} & \underline{72.10} & \underline{66.58} & \textbf{63.27} & \textbf{61.77} & 79.83 & \underline{76.72} & \underline{74.17} & \underline{72.21}\\
    APPNP-DropNode & 81.79 & 78.17 & 75.79 & 73.76 & 71.50 & 65.86 & 63.01 & 61.01 & 79.81 & 76.45 & 74.05 & 72.01\\
    APPNP-DropMessage & \underline{82.37} & \textbf{79.12} & \underline{76.60} & \underline{74.59} & \textbf{72.65} & \textbf{66.74} & \underline{63.25} & \underline{61.59} & \textbf{80.04} & \textbf{76.73} & \textbf{74.25} & \textbf{72.25}\\
    \bottomrule
    \end{tabular}
    }
\end{table*}

\subsection{Related Works About Data Augmentations}
In this section, we introduce some previous works about data augmentation techniques that are related to the random dropping methods discussed in our paper.
GNN's effectiveness tends to be weakened due to the noise and low-resource problems in real-world graph data~\cite{dai2021nrgnn}\cite{ding2022meta}\cite{sun2020multi}. 
Data augmentation has attracted a lot of research interest as it is an effective tool to improve model performance in noisy settings~\cite{zhao2021data}. 
However, apart from i.i.d. data, graph data, which is defined on non-Euclidean space with multi-modality, is hard to be handled by conventional data augmentation methods~\cite{ding2022data}. To address this problem, an increasing number of graph data augmentation methods have been proposed, which include feature-wise~\cite{velickovic2019deep}, structure-wise~\cite{cai2021graph}\cite{jin2021graph}, and label-wise augmentations~\cite{zhang2017mixup}\cite{verma2019manifold}.

\subsection{Additional Experiments}
\vpara{Results on SOTA models.}
We also compare the performance of different random dropping methods and our proposed DropMessage on two SOTA backbone models: DAGNN~\cite{liu2020towards} and GCNII~\cite{chenWHDL2020gcnii}.
We employ these two models to perform node classification tasks on five datasets.

\begin{table*}[h!]
    \centering
    \caption{Results on SOTA Models.}
    \label{tab:backbone}
    \resizebox{0.98\textwidth}{!}{
    \begin{tabular}{c|c|c|c|c|c}
    \toprule
    {\diagbox [width=8em, trim=lr] {Model}{Dataset}}
    & Cora & CiteSeer & PubMed & ogbn-arxiv & Flickr\\
    \midrule
    DAGNN & $82.73\pm0.27$ & $72.82\pm0.43$ & $80.37\pm0.32$ & $71.60\pm0.19$ & $0.5243\pm0.0017$ \\
    DAGNN-Dropout & $84.50\pm0.50)$ & $73.30\pm0.58$ & $80.56\pm0.52$ & $72.09\pm0.25$ & $0.5324\pm0.0031$ \\
    DAGNN-DropEdge & $83.78\pm0.55$ & $73.02\pm0.50$ & $80.27\pm0.42$ & $71.88\pm0.21$ & $0.5311\pm0.0029$ \\
    DAGNN-DropNode & $83.97\pm0.45$ & $73.41\pm0.67$ & $80.42\pm0.40$ & $72.03\pm0.27$ & $0.5304\pm0.0016$ \\
    DAGNN-DropMessage & $\mathbf{84.64\pm0.61}$ & $\mathbf{73.35\pm0.69}$ & $\mathbf{80.58\pm0.58}$ & $\mathbf{72.23\pm0.33}$ & $\mathbf{0.5325\pm0.0024}$ \\
    \midrule
    GCNII & $82.24\pm0.15$ & $72.11\pm0.41$ & $79.85\pm0.20$ & $72.37\pm0.13$ & $0.5143\pm0.0032$ \\
    GCNII-Dropout & $85.45\pm0.50$ & $73.42\pm0.55$ & $80.18\pm0.37$ & $72.74\pm0.16$ & $0.5170\pm0.0029$ \\
    GCNII-DropEdge & $84.96\pm0.46$ & $72.98\pm0.42$ & $80.05\pm0.29$ & $72.41\pm0.15$ & $0.5180\pm0.0023$ \\
    GCNII-DropNode & $85.15\pm0.66$ & $73.44\pm0.48$ & $79.87\pm0.21$ & $72.38\pm0.21$ & $0.5159\pm0.0029$ \\
    GCNII-DropMessage & $\mathbf{85.53\pm0.62}$ & $\mathbf{73.28\pm0.51}$ & $\mathbf{80.21\pm0.33}$ & $\mathbf{72.78\pm0.16}$ & $\mathbf{0.5192\pm0.0026}$ \\
    \bottomrule
    \end{tabular}
    }
\end{table*}

Table \ref{tab:backbone} presents the experimental results.
The results indicate that DropMessage consistently outperforms other random dropping methods.

\vpara{Comparison to Random Augmentation Methods.}
Random dropping methods are similar to random augmentation techniques used in graph contrastive learning.
We compare the performance of our proposed DropMessage with some widely-used augmentation techniques~\cite{Ding2022DataAF}, and their brief descriptions are listed as below.

\textit{Node Dropping}: it randomly discards a certain portion of vertices along with their connections.

\textit{Edge Perturbation}: it perturbs the connectivities in graph through randomly adding or dropping a certain ratio of edges.

\textit{Subgraph}: it samples a subgraph using random walk.

\begin{table*}[h!]
    \centering
    \caption{Results of Random Augmentation Methods.}
    \label{tab:add_methods}
    \resizebox{0.98\textwidth}{!}{
    \begin{tabular}{c|c|c|c|c|c}
    \toprule
    {\diagbox [width=8em, trim=lr] {Model}{Dataset}}
    & Cora & CiteSeer & PubMed & ogbn-arxiv & Flickr\\
    \midrule
    GCN & $80.68\pm0.37$ & $70.83\pm0.55$ & $78.97 \pm0.67$ & $70.08 \pm0.41$ & $0.5188 \pm0.0044$ \\
    GCN-DropMessage & $\mathbf{83.33\pm0.59}$ & $\mathbf{71.83\pm0.59}$ & $\mathbf{79.20 \pm0.53}$ & $\mathbf{71.27\pm0.56}$ & $\mathbf{0.5223\pm0.0042}$ \\
    GCN-NodeDropping & $81.06 \pm0.53$ & $71.20 \pm0.33$ & $79.06 \pm0.25$ & $70.91 \pm0.54$ & $0.5210 \pm0.0065$ \\
    GCN-EdgePerturbation & $82.12 \pm0.54$ & $71.70 \pm0.41$ & $79.08 \pm0.74$ & $70.69 \pm0.62$ & $0.5205\pm0.0051$ \\
    GCN-Subgraph & $77.40 \pm0.47$ & $68.88 \pm0.46$ & $77.44 \pm0.58$ & $69.34 \pm0.69$ & $0.5087 \pm0.0061$ \\
    \midrule
    GAT & $81.35 \pm0.59$ & $70.14 \pm0.62$ & $77.20 \pm0.53$ & $70.32 \pm0.37$ & $0.4988\pm0.0042$ \\
    GAT-DropMessage & $\mathbf{82.20 \pm0.69}$ & $\mathbf{71.48 \pm0.67}$ & $\mathbf{78.14 \pm0.47}$ & $\mathbf{71.13\pm0.54}$ & $\mathbf{0.5013\pm0.0052}$ \\
    GAT-NodeDropping & $80.68 \pm0.58$ & $70.42 \pm1.77$ & $78.08\pm0.22$ & $70.80 \pm0.67$ & $0.4990 \pm0.0051$ \\
    GAT-EdgePerturbation & $81.92 \pm0.60$ & $70.52 \pm1.02$ & $78.04\pm0.54$ & $70.44 \pm0.47$ & $0.5001\pm0.0046$ \\
    GAT-Subgraph & $77.48 \pm0.41$ & $69.20 \pm0.34$ & $78.04 \pm0.21$ & $68.56 \pm0.88$ & $0.4914 \pm0.0060$ \\
    \midrule
    APPNP & $81.45\pm 0.33$ & $70.62 \pm0.34$ & $79.79 \pm0.51$ & $69.11\pm0.33 $& $0.5047\pm0.0024$ \\
    APPNP-DropMessage & $\mathbf{82.37 \pm0.52}$ & $\mathbf{72.65\pm0.24}$ & $\mathbf{80.04\pm0.37}$ & $\mathbf{69.72 \pm0.38}$ & $\mathbf{0.5072\pm0.0044}$ \\
    APPNP-NodeDropping & $81.94\pm0.91$ & $71.36 \pm0.28$ & $79.18 \pm0.27$ & $69.23 \pm0.54$ & $0.5041 \pm0.0058$ \\
    APPNP-EdgePerturbation & $80.10 \pm0.61$ &$ 72.12 \pm0.23 $& $78.90 \pm0.18 $& $69.04 \pm0.35 $& $0.5052\pm0.0043$ \\
    APPNP-Subgraph & $80.28 \pm0.58$ &$69.80\pm0.51$  & $76.64\pm1.57$ & $67.23\pm 1.06$ & $0.4833 \pm0.0089$ \\
    \bottomrule
    \end{tabular}
    }
\end{table*}

Table \ref{tab:add_methods} summaries the experimental results on five datasets (Cora, CiteSeer, PubMed, ogbn-arxiv and Flickr) for node classification tasks.
The results indicate that our proposed DropMessage consistently outperforms the random data augmentation methods.

\vpara{Graph Property Prediction.}
We perform the graph property prediction task on ogbg-molhiv and ogbg-molpcba~\cite{hu2020ogb}.
They are two molecular property prediction datasets adopted from the MoleculeNet~\cite{Wu2017MoleculeNetAB}.
Table \ref{tab:result_graph_classification} shows the results of ROC-AUC scores with GCN as the backbone model.

\begin{table*}[h!]
    \centering
    \caption{ROC-AUC Scores for Graph Property Prediction.}
    \label{tab:result_graph_classification}
    \begin{tabular}{c|c|c}
    \toprule
    {\diagbox [width=8em, trim=lr] {Model}{Dataset}}
    & ogbg-molhiv 
    & ogbg-molpcba\\
    \midrule
    GCN & $0.7581\pm0.0061$ & $0.2013\pm0.0012$ \\
    GCN-Dropout & $0.7606\pm0.0097$ & $0.2020\pm0.0024$ \\
    GCN-DropEdge & $0.7602\pm0.0093$ & $0.2015\pm0.0017$ \\
    GCN-DropNode & $0.7592\pm0.0087$ & $0.2019\pm0.0022$ \\
    GCN-DropMessage & $\mathbf{0.7614\pm0.0103}$ & $\mathbf{0.2027\pm0.0041}$ \\
    \bottomrule
    \end{tabular}
\end{table*}

\vpara{Graph Rewiring.}
We conduct experiments on graph rewiring to evaluate the robustness of random dropping methods.
Specifically, we first remove a certain ratio of edges, and then randomly add an equal number of edges.
We perform the experiments on three citation datasets with GCN as the backbone model.
Table \ref{tab:result_perturb_2} presents the results.

\begin{table*}[h!]
    \centering
    \caption{Classification Accuracy (\%) for Rewiring Graphs.}
    \label{tab:result_perturb_2}
    \resizebox{0.98\textwidth}{!}{
    \begin{tabular}{c|cccc|cccc|cccc}
    \toprule
    \multirow{2}{*}{{\diagbox [width=8em, trim=lr] {Model}{Dataset}}} 
    & \multicolumn{4}{c}{Cora} 
    & \multicolumn{4}{c}{CiteSeer} 
    & \multicolumn{4}{c}{PubMed}\\
    \cmidrule(lr){2-13} 
    ~ & 0\% & 10\% & 20\% & 30\% & 0\% & 10\% & 20\% & 30\% & 0\% & 10\% & 20\% & 30\%\\
    \midrule
    GCN & 80.68 & 77.36 & 74.64 & 71.96 & 70.83 & 68.46 & 65.84 & 62.94 & 78.97 & 76.14 & 72.70 & 71.70 \\
    GCN-Dropout & 83.16 & 79.06 & 75.90 & 72.84 & 71.48 & 69.14 & 66.24 & 63.20 & 79.13 & 76.60 & 73.92 & 72.46 \\
    GCN-DropEdge & 81.69 & 78.04 & 75.02 & 72.10 & 71.43 & 69.30 & 66.18 & 63.18 & 79.06 & 76.26 & 73.90 & 72.36 \\
    GCN-DropNode & 83.04 & 78.16 & 75.82 & 72.74 & \textbf{72.12} & \textbf{69.84} & \textbf{66.96} & \textbf{63.48} & 79.00 & 76.64 & 74.06 & 72.48 \\
    GCN-DropMessage & \textbf{83.33} & \textbf{79.34} & \textbf{76.24} & \textbf{73.28} & 71.83 & 69.52 & 66.40 & 63.38 & \textbf{79.20} & \textbf{76.74} & \textbf{74.24} & \textbf{72.60} \\
    \bottomrule
    \end{tabular}
    }
\end{table*}

\vpara{Adversarial Attacks.}
We also conduct experiments to evaluate the effectiveness of random dropping methods against adversarial attacks.
We apply PGD attacks~\cite{xu2019topology} to perturb the graph structures on Cora and CiteSeer, using GCN as the backbone model.
Figure \ref{fig:attacks} presents the results.

\begin{figure}[h!]
\centering
\subfloat[Cora]{
\includegraphics[width=0.40\textwidth]{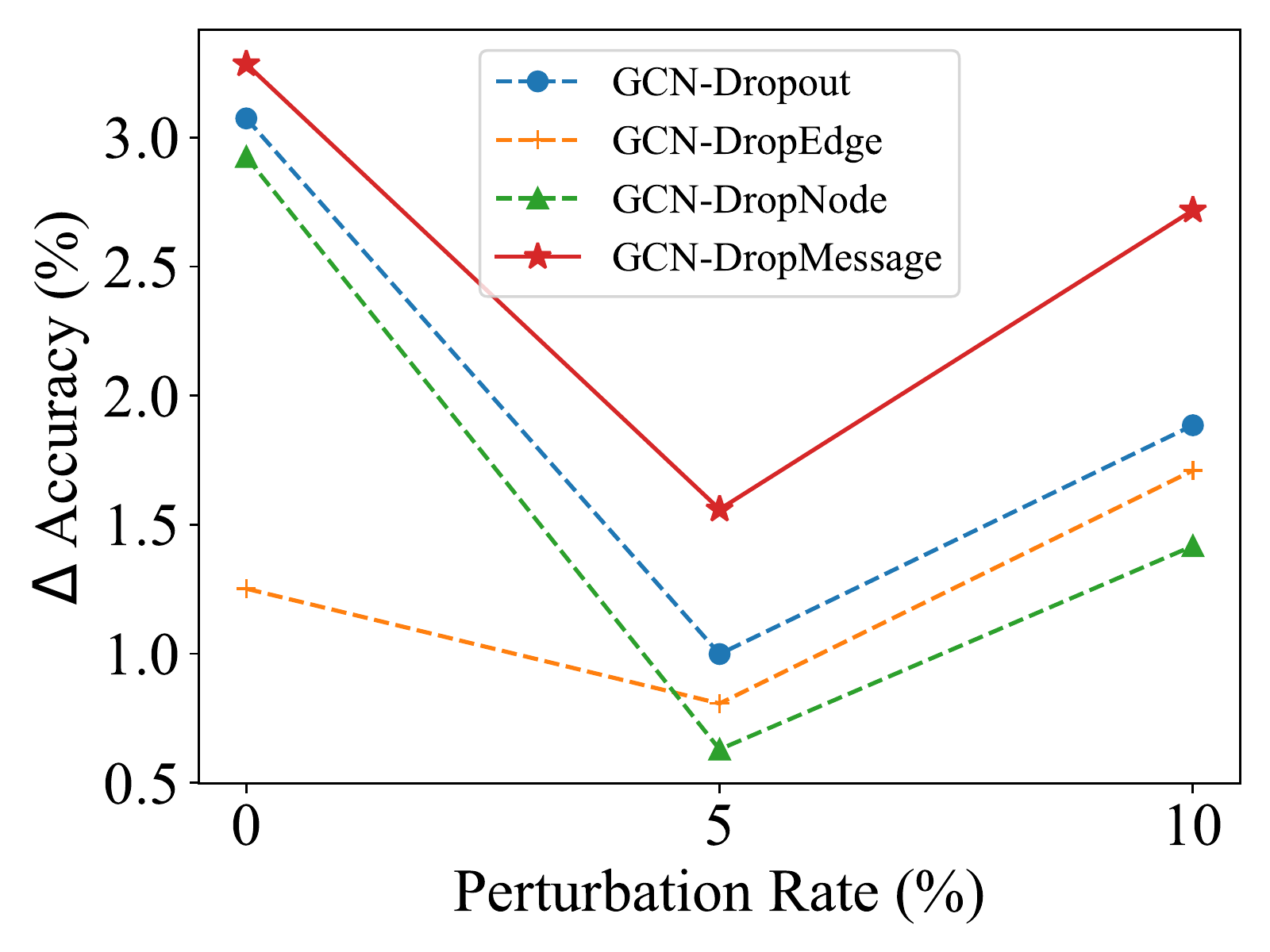}
}
\hspace{0.01\textwidth}
\subfloat[CiteSeer]{
\includegraphics[width=0.40\textwidth]{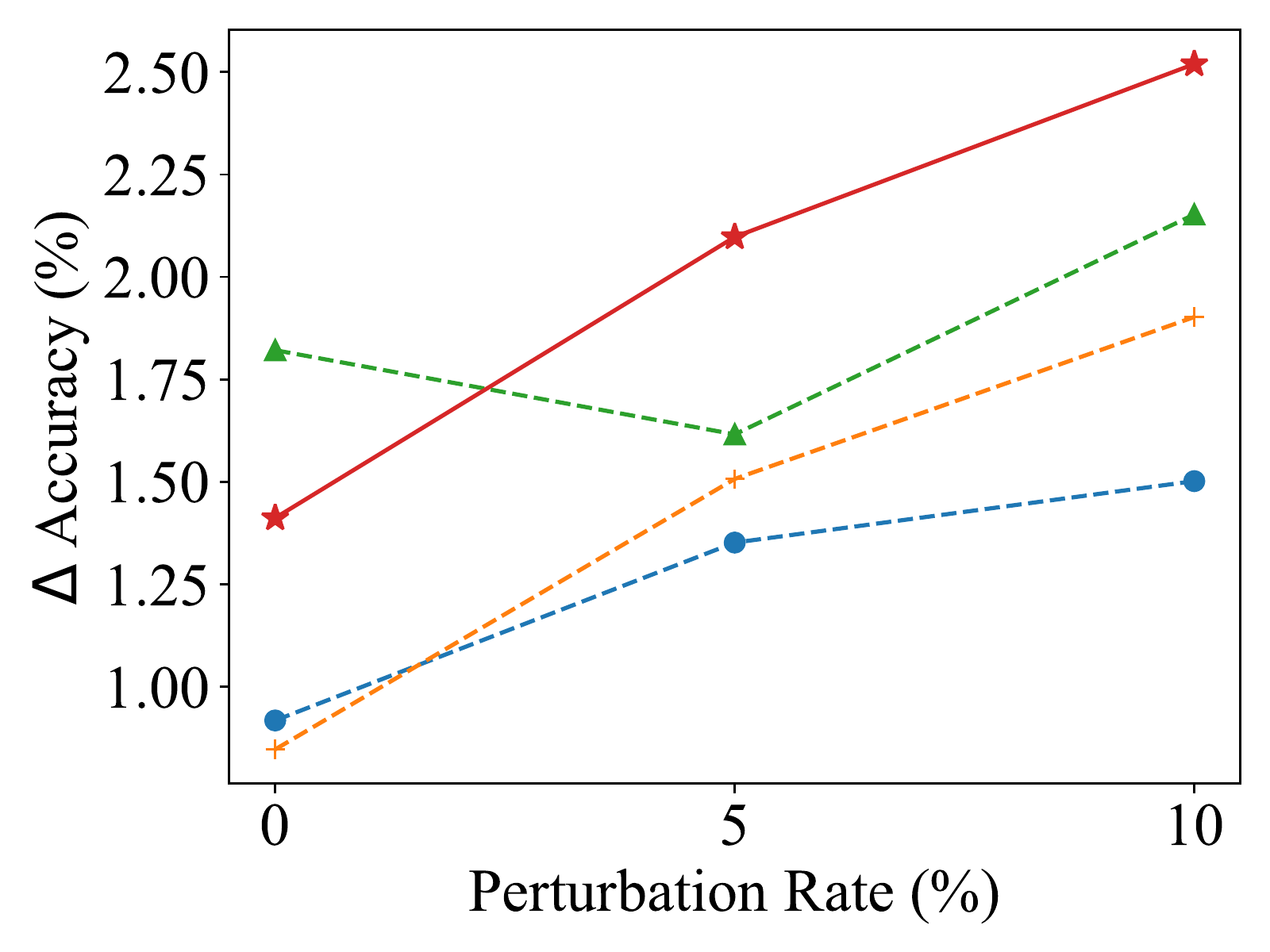}
}
\caption{
Model Performances Against PGD Attacks.
}
\label{fig:attacks}
\end{figure}

\subsection{Theoretical Analysis Towards Over-smoothing}

The over-smoothing issue extensively exists in graph neural networks~\cite{Chen2020MeasuringAR,Elinas2022AddressingOI}.
As the number of model layers increases, node representations become nearly indistinguishable, which leads to a significant decrease on model performance.
However, random dropping methods can alleviate this problem, and our proposed DropMessage achieves the best effect compared with other random dropping methods.

Now, we give a theoretical demonstration from the perspective of the information theory by measuring the Shannon entropy~\cite{Shannon2001AMT} of propagated messages.
The Shannon entropy quantifies ``the amount of information'', and measures the degree of confusion and diversity of the given variable, which can be expressed as:

\begin{align}
\mathrm{H}(X):=\mathbb{E}[-\log p(X)]=-\sum_{x \in \mathcal{X}} p(x) \log p(x)\nonumber
\end{align}
where $\mathcal{X}$ denotes all possible values of $x$.
Now, we measure the degree of over-smoothing of the model by calculating the Shannon entropy of the propagated messages.
Intuitively, the larger the Shannon entropy, the more diverse the propagated messages are, and the less likely the aggregated representations will converge.

We assume there are $k$ types of $d$-dimension messages propagated in the GNN model.
For the $i$-th type of the propagated messages, they are delivered for $t_i$ times by $n_i$ nodes.
Then, we can calculate the Shannon entropy for the initial messages:
\begin{align}
{\rm H}(clean)=\sum^{k}_{i}-p_{i}log \, p_i \nonumber    
\end{align}

Then, we consider the Shannon entropy after random dropping with dropping rate $\delta$.
For DropEdge and DropNode, they generate blank messages at the ratio $\delta$.
So the Shannon entropy can be expressed as:

\begin{align}
E({\rm H}(DropEdge)),E({\rm H}(DropNode))=-\delta log(\delta)+(1-\delta)\sum^{k}_{i}-p_{i}log((1-\delta)p_i)\nonumber
\end{align}

When it comes to Dropout, the Shannon entropy can be calculated as:

\begin{align}
E({\rm H}(Dropout))=\delta  \sum^{k}_{i} -p_i log(p_i/min\{d,n_i\})+(1-\delta)\sum^{k}_{i}-p_{i}log((1-\delta)p_i) \nonumber
\end{align}

As for our proposed DropMessage, the Shannon entropy can be expressed as:

\begin{align}
E({\rm H}(DropMessage))=\delta  \sum^{k}_{i} -p_i log(p_i/min\{d,t_i\})+(1-\delta)\sum^{k}_{i}-p_{i}log((1-\delta)p_i) \nonumber
\end{align}

Besides, we have $t_i\geq n_i$.
So, with proper dropping rate $\delta$, we can obtain:

\begin{align}
E({\rm H}(DropMessage))\geq E({\rm H}(Dropout)),E({\rm H}(DropEdge)),E({\rm H}(DropNode))\geq {\rm H}(clean) \nonumber
\end{align}

From above derivations, we prove the effectiveness of random dropping methods in alleviating over-smoothing issue, and our proposed DropMessage achieves the best effect.}

\end{document}